\pdfoutput=1
\documentclass{article}

\usepackage[preprint]{neurips_2026}

\usepackage[utf8]{inputenc} %
\usepackage[T1]{fontenc}    %
\usepackage{hyperref}       %
\usepackage{url}            %
\usepackage{booktabs}       %
\usepackage{amsfonts}       %
\usepackage{nicefrac}       %
\usepackage{microtype}      %
\usepackage{xcolor}         %
\usepackage{wrapfig}
\usepackage{graphicx}
\usepackage{amsmath}
\usepackage{enumitem}
\usepackage{xspace}

\usepackage{multirow}
\usepackage{multicol}
\usepackage{tabularx}

\usepackage{amsthm}

\usepackage{booktabs}
\usepackage{multirow}
\usepackage[table]{xcolor}
\usepackage{wrapfig}

\usepackage{subcaption}
\usepackage[most]{tcolorbox}
\usepackage{amsmath}
\usepackage{amssymb}
\usepackage{amsthm}
\newtheorem{theorem}{Theorem}
\newtheorem{corollary}{Corollary}

\title{AgentVidBench: A Multi-Hop Video Question Answering Benchmark for Evaluating MLLM Agents}

\hypersetup{
    colorlinks = true,
    linkbordercolor = {white},
    linkcolor = blue!60!black,
    citecolor = blue!60!black,
    urlcolor = blue!60!black
}

\makeatletter
\def\blfootnote{\xdef\@thefnmark{}\@footnotetext}
\makeatother

\author{
  Seoyeon An\thanks{Equal contribution.} , Hyeonseo Jang\footnotemark[1] , Minsu Kim\footnotemark[1] , Chanho Lee, Younghan Park, Kangwook Lee
  \vspace{0.4em} \\
  KRAFTON
}

\begin{document}

\maketitle
\blfootnote{Correspondence to: Minsu Kim <minsu.kim@krafton.com>.}

\vspace{-0.55cm}
\begin{abstract}
Comprehensive video understanding is crucial for advancing artificial intelligence toward the intricate dynamics of the physical world. While recent advances in Multimodal Large Language Models (MLLMs) have demonstrated remarkable capabilities in video understanding, existing benchmarks remain confined to simple scene-level queries or global summaries that require only single-step inference. Real-world video understanding involves more challenging tasks that require multi-hop multimodal reasoning, and there is a critical absence of video benchmarks equipped to rigorously evaluate these agentic capabilities. To bridge this gap, we introduce \textbf{AgentVidBench}, a multi-hop video question answering benchmark focused on evaluating the spatial, temporal, and causal reasoning capabilities of MLLM agents. Beyond standard question-answer pairs, AgentVidBench provides step-by-step solution traces to support trajectory evaluation that assesses whether agents explicitly acquire the evidence needed to justify their answers. Experiments with 12 proprietary and open-source MLLMs show that single-turn performance remains limited on AgentVidBench, while integrating these models into state-of-the-art agentic workflows generally improves performance with respect to both accuracy and trajectory scores. We further present a simple yet effective agentic strategy that serves as a competitive baseline on AgentVidBench, establishing our benchmark as a holistic testbed for future research on agentic video understanding. Code and datasets are available at \url{https://github.com/krafton-ai/agentvidbench} and \url{https://huggingface.co/datasets/agentvidbench/agentvidbench}.

\end{abstract}

\vspace{-0.34cm}
\section{Introduction}

The rapid proliferation of video content across digital platforms has transformed it into the primary medium for capturing the intricate dynamics of the physical world\,\citep{DBLP:conf/cvpr/RenWG0KFJ25,DBLP:journals/corr/abs-2510-21447, DBLP:journals/corr/abs-2511-00062}. Unlike static images, video serves as a rich, high-dimensional repository of multimodal information, seamlessly integrating temporal sequences, spatial transformations, and synchronized audio-visual cues\,\citep{DBLP:conf/cvpr/CarreiraZ17, DBLP:conf/cvpr/GirdharELSAJM23}. This inherent complexity makes video understanding a pivotal frontier in artificial intelligence, as it requires models to go beyond simple pattern recognition and achieve a holistic comprehension of long-range dependencies and causal relationships\,\citep{DBLP:conf/cvpr/0004GGH18, DBLP:conf/cvpr/XiaoSYC21}. There has been a growing emphasis on the development of Multimodal Large Language Models (MLLMs), which harness the sophisticated reasoning capabilities of LLMs to decode and interpret diverse video signals.

Despite the remarkable potential of MLLMs, relying solely on their current architectural paradigms presents significant limitations in achieving human-level comprehension of complex video scenarios\,\citep{DBLP:conf/cvpr/0002WH00LWX0L0024, DBLP:conf/cvpr/FuDLLRZWZSZCLLZ25, DBLP:conf/nips/MangalamAM23}. 
The integration of agentic frameworks represents a prominent strategy to mitigate the inherent structural bottlenecks\,\citep{DBLP:conf/iclr/YaoZYDSN023, DBLP:journals/corr/abs-2303-11381, DBLP:conf/nips/0001ST00Z23}. Unlike traditional single-pass models, these agentic frameworks employ iterative reasoning, memory management, and external tool integration to interactively query and analyze video streams, thereby dynamically resolving spatio-temporal ambiguities\,\cite{DBLP:conf/eccv/WangZZY24, DBLP:conf/icml/YangCLW024}. 
However, there is a critical absence of complex video benchmarks explicitly designed to assess the true efficacy of multi-step agentic pipelines in solving compositional video understanding tasks.

\begin{figure}[t]
  \centering
  \includegraphics[width=0.95\textwidth, trim=0 0.2cm 0 0]{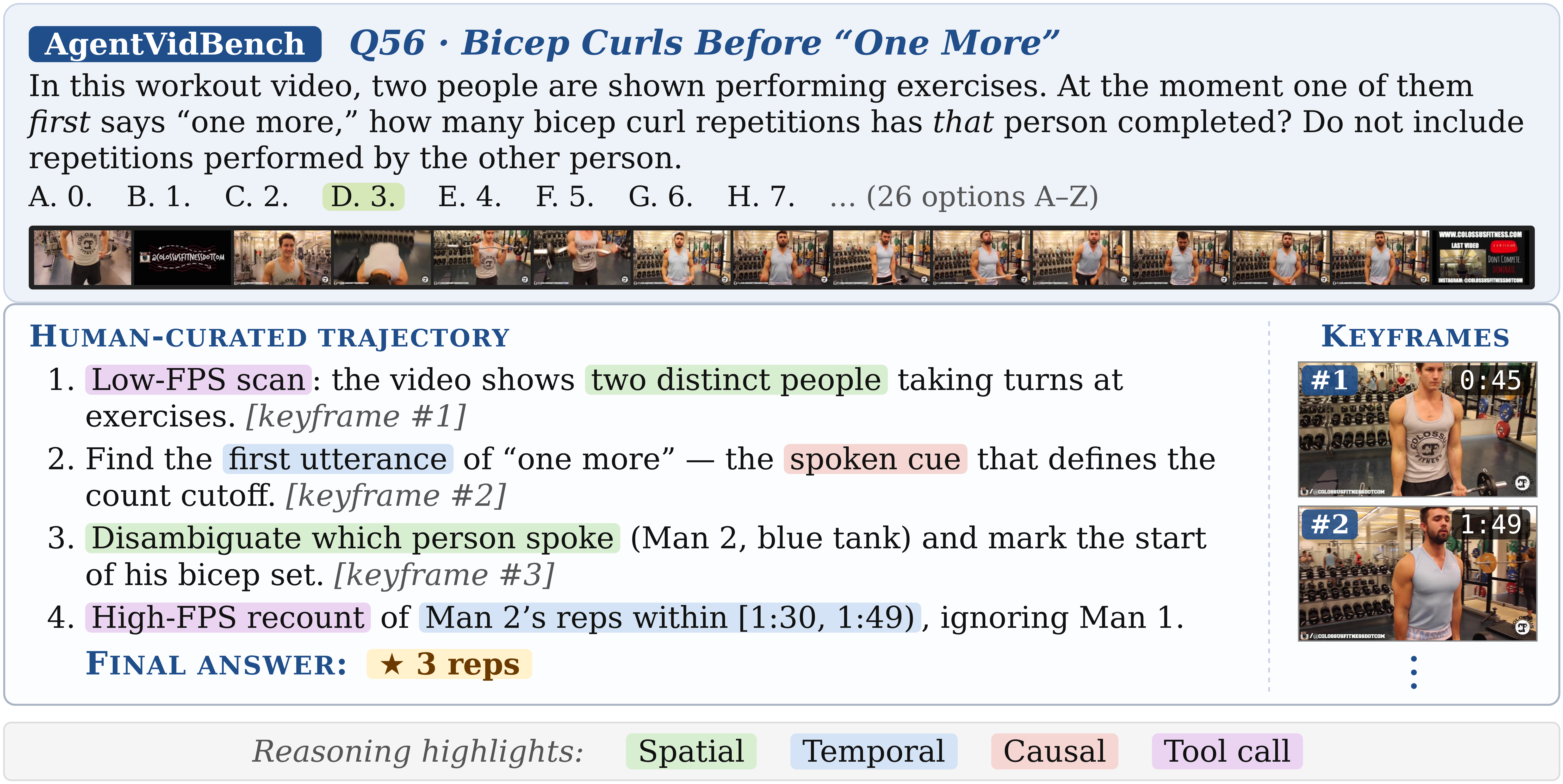}
  \caption{An illustrative example from AgentVidBench, requiring spatial, temporal, and causal reasoning. The example shows the question, answer choices (ground-truth answer highlighted), a full-video filmstrip overview, and a summarized human-curated trajectory.}
  \label{fig:illustrative-examples}
  \vspace{-0.6cm}
\end{figure}

To this end, we present \textbf{AgentVidBench} (Figure~\ref{fig:illustrative-examples}), a multi-hop video question answering benchmark specifically focused on evaluating the spatial, temporal, and causal video understanding capabilities of MLLM agents. We emphasize these three criteria as they reflect the necessary shift from passive, uniform video observation to active, interactive video exploration, akin to how humans naturally navigate and comprehend complex visual media. Specifically, solving complex video queries requires agents to explicitly zoom into specific regions for precise spatial reasoning, selectively focus on specific timeframes for temporal grounding, and actively traverse backward and forward across the timeline to infer causal links between disparate events. 

AgentVidBench comprises a meticulously curated set of 100 complex multiple-choice questions spanning 71 videos sourced from public web repositories. Specifically, the videos capture a broad spectrum of 12 fine-grained genres organized into 4 super-categories, with varying durations of up to 30 minutes. 
During the authoring process, we prohibited questions resolvable from a single frame or a localized timestamp lookup, ensuring each question is inherently compositional and demands a multi-step trajectory of active tool-use.
Every question is cross-validated by multiple human evaluators to confirm a single correct answer, and we systematically generate 25 challenging distractors to rigorously preclude models from relying on random chance. Unlike existing benchmarks that evaluate only the final output, our benchmark uniquely features detailed, step-by-step solution traces for each question to explicitly facilitate the assessment of multi-step agentic workflows. 

We evaluate various state-of-the-art MLLMs alongside their integration with agentic workflows using AgentVidBench. Specifically, our evaluation includes GPT-5\,\citep{DBLP:journals/corr/abs-2601-03267}, Claude Opus 4.7\,\citep{anthropic2026claudeopus47}, Gemini 2.5\,\citep{DBLP:journals/corr/abs-2507-06261}, Qwen3-VL\,\citep{DBLP:journals/corr/abs-2511-21631}, Qwen3.5\,\citep{qwen3.5}, and Gemma 4\,\citep{deepmind2026gemma4} for MLLMs, and STAR\,\citep{fan2025toolaugmented}, DVD\,\citep{DBLP:conf/nips/ZhangJGLLLL25}, and AVP\,\citep{wang2025activevideoperceptioniterative} for compatible agentic workflows. 
Experimental results on AgentVidBench demonstrate that conventional single-turn MLLMs struggle to solve multi-hop video QA tasks, whereas deploying them within state-of-the-art agentic workflows enhances their foundational performance, yielding improvements in both accuracy and trajectory scores. In addition, our ablation study reveals that adaptive video inspection across spatial, temporal, and causal reasoning capabilities is beneficial for complex video understanding. Based on these results, we suggest a simple yet effective agentic workflow that instantiates active evidence-seeking through controllable video inspection, serving as a strong baseline for future research. We believe that our findings on AgentVidBench not only highlight the strengths and vulnerabilities of existing models but also demonstrate how structured agentic workflows can significantly mitigate the inherent limitations of standard inference.

\textbf{Summary of Contributions:} (1) We introduce AgentVidBench, a multi-hop video question answering benchmark designed to evaluate the spatial, temporal, and causal reasoning capabilities of MLLM agents; (2) We extensively benchmark state-of-the-art MLLMs both as standalone models and within agentic workflows on AgentVidBench, demonstrating the necessity of multi-hop multimodal reasoning for video comprehension; (3) We propose a simple yet effective agentic strategy for fine-grained video navigation, and a novel evaluation pipeline to assess the quality of agent trajectories.

\section{Related Work}

\vspace{-0.2cm}
\paragraph{Video Question Answering.}
Video Question Answering (Video QA)\,\citep{DBLP:conf/emnlp/Zhong0X0DC22} aims to generate accurate natural language responses to queries based on dynamic video content, requiring a comprehensive understanding of spatial, temporal, and causal relationships\,\citep{DBLP:conf/mm/XuZX0Z0Z17, DBLP:conf/cvpr/JangSYKK17, DBLP:conf/mm/ShangLXJC21, DBLP:conf/sigir/YangFJWC21, DBLP:conf/cvpr/XiaoSYC21, DBLP:conf/aaai/XiaoYL0JC22}. 
Moving beyond traditional feature-extraction methods, the paradigm has shifted toward transformer-based architectures and Multimodal Large Language Models (MLLMs)\,\citep{DBLP:journals/monet/SunLLYWZ21, DBLP:journals/access/KhuranaD21, DBLP:journals/corr/abs-2404-16038, DBLP:journals/ijcv/XiaoHQLLZTYLCY25, DBLP:journals/tcsv/TangBXSLWZALZVHZLFZZLLX26}. 
Recently, the field has witnessed a surge in powerful foundation models as both proprietary (e.g., GPT-4o\,\citep{DBLP:journals/corr/abs-2410-21276}, Gemini 1.5\,\citep{DBLP:journals/corr/abs-2403-05530}) and open-source models (e.g., Qwen-3-VL\,\citep{DBLP:journals/corr/abs-2511-21631}, Gemma 4\,\citep{deepmind2026gemma4}) now exhibit unprecedented capabilities in video understanding.
However, they typically approach Video QA as a one-step generation process, fundamentally limiting their capacity to resolve complex queries that demand iterative, multi-step reasoning. To overcome this bottleneck, several studies focus on MLLM agents for video understanding by leveraging active temporal navigation (e.g., VideoAgent\,\citep{DBLP:conf/eccv/FanMWDLGL24}, Vgent\,\citep{DBLP:conf/nips/ShenZCE25}), hierarchical reasoning (e.g., VideoTree\,\citep{DBLP:conf/cvpr/WangYSYCBB25}, SiLVR\,\citep{DBLP:journals/tmlr/ZhangLWBB26}), and iterative visual discovery (e.g., VideoLucy\,\citep{DBLP:conf/nips/ZuoDKYJZSPLG25}, LVAgent\,\citep{DBLP:conf/iccv/ChenYCWLLW25}, DVD\,\citep{DBLP:conf/nips/ZhangJGLLLL25}).
Despite their immense promise, the evaluation of these agentic systems is largely confined to traditional benchmarks comprising simple video queries\,\citep{DBLP:conf/eccv/WangZZY24, DBLP:journals/corr/abs-2504-20091, DBLP:journals/corr/abs-2503-16032, DBLP:journals/corr/abs-2506-18071, DBLP:conf/visapp/MontesL26}, which fail to adequately measure their autonomous planning and iterative reasoning capabilities. 

\vspace{-0.2cm}
\paragraph{Evaluation Benchmarks for MLLMs.}
To evaluate the growing capabilities of MLLMs, a variety of Video QA benchmarks have been established. 
Traditional Video QA datasets primarily focus on basic action recognition and object tracking within short video snippets\,\citep{DBLP:conf/mm/XuZX0Z0Z17, DBLP:conf/aaai/YuXYYZZT19}.
Subsequent efforts expanded the scope into diverse domains and specialized reasoning tasks, introducing datasets like MovieQA\,\citep{DBLP:conf/cvpr/TapaswiZSTUF16} and TVQA\,\citep{DBLP:conf/emnlp/LeiYBB18} for narrative understanding, TGIF-QA\,\citep{DBLP:conf/cvpr/JangSYKK17} for spatio-temporal reasoning in short animations, NExT-QA\,\citep{DBLP:conf/cvpr/XiaoSYC21} for causal and temporal explanations, and MarioQA\,\citep{DBLP:conf/iccv/MunSJH17} for gameplay scenarios. 
Building on this foundation to address the necessity for even deeper cognitive evaluation, recent comprehensive benchmarks such as EgoSchema\,\citep{DBLP:conf/nips/MangalamAM23}, MVBench\,\citep{DBLP:conf/cvpr/0002WH00LWX0L0024}, Video-MME\,\citep{DBLP:conf/cvpr/FuDLLRZWZSZCLLZ25}, and Video-MME-v2\,\citep{DBLP:journals/corr/abs-2604-05015} have been introduced. 
Although these benchmarking works only provide the performance of MLLMs, we evaluate both MLLMs and advanced MLLM agents on our complex multi-hop video question answering dataset, thereby explicitly quantifying the performance gap between passive perception and autonomous, multi-step reasoning. In addition, the inclusion of detailed step-by-step solution traces allows our benchmark to evaluate not only the final answer accuracy but also the intrinsic quality and logical coherence of the MLLM agent trajectories. We provide more detailed comparisons of AgentVidBench with existing benchmarks in Appendix~\ref{sec:comparison_agentvidbench}.

\vspace{-0.2cm}
\paragraph{Evaluation Benchmarks for MLLM Agents.}
The evaluation of MLLMs as autonomous agents has recently gained substantial traction, leading to the development of agent-centric benchmarks. Existing frameworks, such as WebArena\,\citep{DBLP:conf/iclr/ZhouX0ZLSCOBF0N24}, VisualWebArena\,\citep{DBLP:conf/acl/KohLJDLHNZSF24}, and OSWorld\,\citep{DBLP:conf/nips/XieZCLZCHCSLLXZ24}, successfully evaluate multimodal agents situated in digital environments, focusing on tasks like web navigation and operating system control. In the visual and physical domains, benchmarks like EgoPlan\,\citep{DBLP:journals/ijcv/ChenGGDLWXSL26} emphasize embodied task planning from egocentric viewpoints. However, there remains a distinct scarcity of agent benchmarks specifically tailored for multi-hop Video QA. Our proposed AgentVidBench bridges this gap by formulating Video QA as an interactive environment, directly measuring MLLM agents' ability to perform autonomous, multi-step reasoning over dense temporal dynamics.

\vspace{-0.2cm}
\section{AgentVidBench}
We introduce AgentVidBench, a benchmark designed to evaluate MLLM agents through 100 complex multiple-choice questions over 71 videos.
These videos encompass a broad spectrum of 12 distinct genres, with durations ranging from a few seconds to several tens of minutes, thereby capturing a wide variety of dynamic visual contexts. Each question is paired with 26 answer options labeled A--Z, so that random guessing yields an expected accuracy of 3.85\%. In the following sections, we describe the theoretical motivation of AgentVidBench (Section~\ref{sec:benchmark-theory}) and how the dataset including videos, questions, answer choices, milestones, and metadata was collected (Section~\ref{sec:benchmark-construction}). We also provide detailed statistics of the dataset (Section~\ref{sec:statistics}), followed by our comprehensive evaluation pipeline for assessing both standard accuracy and agent trajectory scores (Section~\ref{sec:evaluation-pipeline}).

\vspace{-0.2cm}
\subsection{Theoretical Motivation}
\label{sec:benchmark-theory}
Standard single-pass MLLMs inherit the fixed computational depth of their underlying transformer architectures. Consequently, these models possess a bounded capacity for sequential computation within a single inference pass. 

\begin{corollary}[Informal; Fixed-Depth Limitation on Multi-Hop Reasoning]
\label{cor:informal_fixed_depth}
For any causal fixed-depth transformer with bounded precision, width, and number of attention heads that satisfies the query-independent attention assumption of Yao et al.~\citep{DBLP:conf/emnlp/YaoDZHK25}, there exists a sufficiently large $k$ and a worst-case $k$-hop reasoning task that the model fundamentally cannot solve in a single forward pass.
\vspace{-0.2cm}
\end{corollary}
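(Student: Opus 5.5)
The plan is to derive the statement as a direct consequence of the depth lower bound of Yao et al.~\citep{DBLP:conf/emnlp/YaoDZHK25}, which is why we label it a corollary. We proceed in three stages.

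\textbf{Stage 1: formalize the task.} A $k$-hop reasoning instance is a context of facts (for example, a chain of relations $r(e_0)=e_1,\dots,r(e_{k-1})=e_k$ presented in arbitrary order) followed by a query about $e_0$. The answer is $e_k$, obtained by composing $k$ lookups. A video multi-hop question embeds such an instance: each hop is one evidence-acquisition step whose target depends on the result of the previous step. The ``worst case'' is an adversarial ordering and choice of entities in the context.

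\textbf{Stage 2: invoke the depth lower bound.} As we read it, the Yao et al.\ result says that, under query-independent attention, a causal transformer of depth $L$ can resolve at most $O(2^L)$ hops, or more conservatively $O(L)$ hops. The mechanism is that each layer can at most double the length of chains already composed at a position, by pointer doubling. So solving $k$ hops requires $L = \Omega(\log k)$ layers.

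\textbf{Stage 3: fix the model and choose $k$.} Fix a model with depth $L$, bounded precision $p$, width $d$, and $H$ heads. Choose $k > f(L)$, where $f$ is the hop capacity bound from Stage 2. Bounded precision and width are needed so that the model cannot pack an exponentially long lookup table into a single residual vector, which would bypass the layer-counting argument. Concretely, we would argue that the number of distinct ``partial chain'' states representable at one position is at most $2^{pdH}$. Once the vocabulary of entities exceeds this, no single position can encode a composition that has not been built up layer by layer. Combining the two bounds, for $k$ large enough the model fails on some instance, i.e., it outputs the wrong $e_k$.

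\textbf{Main obstacle.} The hardest step is making the bounded-capacity argument mesh with the cited theorem's precise hypotheses. We must check that query-independent attention, together with finite precision, genuinely forbids a single layer from composing more than a constant number of hops. If the cited bound is stated only for exact hop capacity under idealized attention, we would first try an information-theoretic counting argument: count how many entity identities a position can distinguish after $\ell$ layers, and show that this count fails to cover all $k$-chains when $k$ exceeds roughly $2^L$ times a term that depends on $(p,d,H)$.
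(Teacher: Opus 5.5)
Your skeleton (fix the model, pick $k$ beyond the hop capacity given by Yao et al., conclude) is the paper's. The gap is that the one step carrying all the weight rests on a misstatement of the cited theorem, plus a repair argument you never carry out. The theorem the paper invokes says the following. For any entity universe $E$ and $k \le |E|-2$, there is a relation set $\mathcal{R}\subseteq E^E$ with this property: any causal transformer with $p$-bit precision, hidden dimension $d$, $H$ heads and $L$ layers that solves the $k$-hop relation-composition task (target $r_k\circ\cdots\circ r_1(e)$), with attention patterns independent of the query entity $e$, must satisfy $L \ge k/(8pdH)$. This bound is linear in $k$, not $\Omega(\log k)$, and precision, width and heads already enter it through the product $pdH$. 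With the correct statement the proof is two lines: choose $k > 8pdHL$, then $|E| \ge k+2$ so the theorem applies, and its $\mathcal{R}$ gives the worst-case task because $k/(8pdH) > L$. You never impose $|E|\ge k+2$. Your ``worst case'' (adversarial ordering of in-context facts) is also not the theorem's, which is the existence of a hard relation set $\mathcal{R}$.

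Logically, any finite capacity bound $f(L,p,d,H)$ would suffice for this existence claim. The problem is what you actually rely on. First, your Stage~1 formalizes a different task: an in-context chain of facts followed by a query, which is pointer chasing. The theorem is about implicit composition of relations from a fixed $\mathcal{R}$, and its query-independence hypothesis refers to $e$ in that setting. Transferring the bound to your task would need its own argument. Second, pointer doubling is the wrong mechanism to cite. It is an achievability construction: it shows logarithmic depth \emph{suffices} when attention can route on intermediate entities, and that is exactly what query-independent attention rules out. This is why the true bound is linear. Your claim that ``each layer can at most double'' is not a proved lower bound. Third, because your version of the bound omitted $p,d,H$, you proposed a counting argument to bring them back in. That argument is unnecessary. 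As sketched it is also not a proof: the key claim, that a position cannot hold ``a composition that has not been built up layer by layer,'' is undefined, and making it rigorous is essentially re-proving Yao et al.'s theorem. Replace Stages~2--3 and your fallback with the exact statement above and the choice of $k$ and $|E|$, and the argument is complete.
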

This corollary directly follows from Theorem~5.1 of Yao et al.~\citep{DBLP:conf/emnlp/YaoDZHK25}, and the formal statement and proof are provided in Appendix~\ref{sec:theoretical_details}. This limitation aligns with a broader body of literature studying the limits of fixed-depth transformers on compositional and multi-hop reasoning~\citep{DBLP:conf/nips/DziriLSLJLWWB0H23, DBLP:conf/emnlp/YaoDZHK25, DBLP:conf/nips/YehudaiAB25}. 
A well-studied way to alleviate this limitation is to allocate additional sequential computation at inference time, which is one form of test-time scaling~\citep{DBLP:conf/nips/Wei0SBIXCLZ22, DBLP:journals/corr/abs-2412-16720, DBLP:journals/nature/GuoYZSWZXZMBZY025}. 
In the context of video understanding, we define a ``hop'' as a single dependent reasoning step or evidence-update operation, with $k$ denoting the total number of such sequential steps required to resolve a complex query. 
Conventional test-time scaling is limited for multi-hop video questions, as it relies on additional reasoning over a fixed visual context and lacks the capacity for adaptive observation to sequentially update evidence from the video.
Motivated by this bottleneck, we design AgentVidBench to evaluate models on questions requiring multiple steps of dependent reasoning and evidence-acquisition, thereby demonstrating the necessity of integrating agentic frameworks.

\vspace{-0.2cm}
\subsection{Benchmark Construction Pipeline}
\label{sec:benchmark-construction}

\begin{figure}[t]
  \centering
  \includegraphics[width=0.85\textwidth, trim=0 0.2cm 0 0]{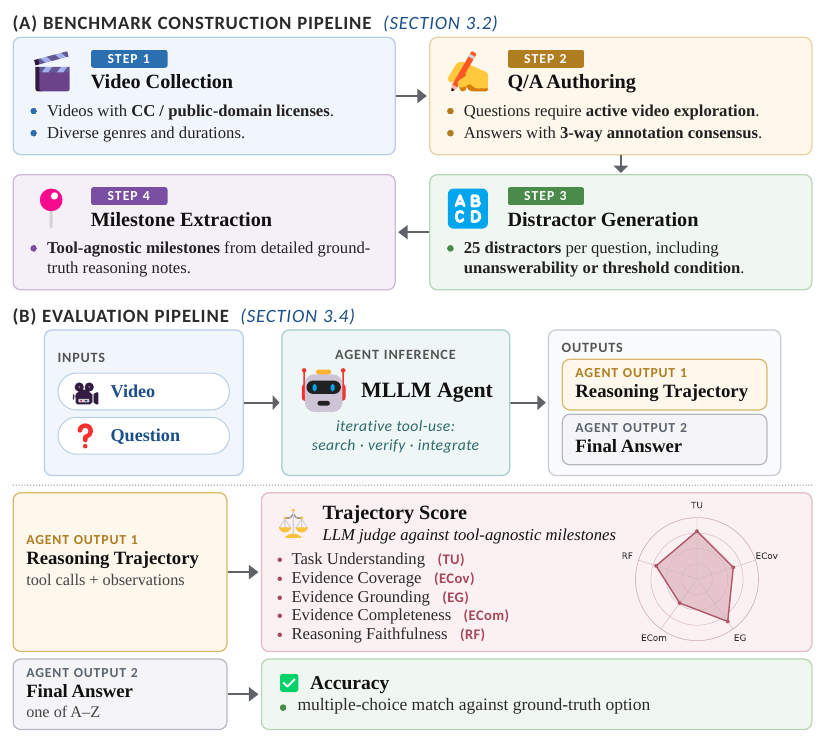}
  \vspace{-0.1cm}
  \caption{
  Overview of AgentVidBench.
  (a) Benchmark construction (Section~\ref{sec:benchmark-construction}): After video collection, questions are authored with consensus-based answers, distractors are generated to form multiple-choice options, and tool-agnostic milestones are extracted from reasoning notes.
  (b) Evaluation (Section~\ref{sec:evaluation-pipeline}): MLLM agent produces a reasoning trajectory and a final answer, which are assessed by a trajectory score and accuracy, respectively.
  }
  \label{fig:dataset-overview}
  \vspace{-0.4cm}
\end{figure}

\paragraph{Video Collection.}
To ensure strict compliance with copyright regulations and guarantee the public availability of our benchmark, we source videos from both publicly accessible online sources and our own recordings. Among the 71 videos in our benchmark, externally sourced videos are drawn from platforms such as YouTube and Wikimedia Commons, as well as public-domain and openly licensed media repositories, including archives maintained by government and public-sector organizations such as NASA.
The full list of video sources and licenses is available in Appendix~\ref{app:video-sources}. For every externally sourced video, we manually verified that its Creative Commons license or public-domain status allows us to include the video file in the benchmark package. 

\vspace{-0.1cm}

\paragraph{Question-Answer Annotation.}
To guarantee the quality of our benchmark, all questions and answers are manually designed, written, and validated through a structured authoring workflow with five human annotators. We organize the sequential authoring workflow by functional roles as follows:

\begin{itemize}[leftmargin=*]
\setlength\itemsep{0cm}
\vspace{-0.1cm}
\item \textbf{Question Annotation}:
Human annotators manually formulate questions for each video through comprehensive and iterative reviews of the video content. In the authoring process, annotators seek to cover a diverse set of skills spanning perception, counting, temporal and spatial reasoning, audio understanding, and multi-step agentic reasoning. Each question is written to necessitate active video exploration, such as revisiting specific moments or aggregating evidence across distant timestamps, while strictly maintaining a single verifiable correct answer.
\item \textbf{Answer Annotation}:
To ensure rigorous quality control, the answer to each question is determined by the absolute consensus of independent human annotators. Specifically, each question is initially answered by three annotators, and they compare their responses with each other. If their responses align perfectly, the consensus answer is finalized as the ground truth. However, if they identify potential issues such as answer discrepancies or ambiguous phrasing, the questions for these flagged instances are refined until agreement on a singular correct answer is achieved. As an additional validation step, we analyze preliminary model responses to uncover latent edge cases not anticipated during human review. We provide more details on the review process and inter-annotator agreement in Appendix~\ref{app:annotation-process}.
\item \textbf{Distractor Generation}:
Following the annotation of the correct answer, we generate 25 distractors for each question based on the answer type. For example, numeric type answers are paired with nearby integer values that span a plausible range, while non-numeric type answers are paired with similar words to the correct answer or relevant candidates visible in the video. To prevent models from exploiting structural shortcuts, the candidate set may also include assertions of unanswerability (e.g., ``the requested name is not mentioned'') or threshold conditions (e.g., ``26+''), strategically serving as either the actual ground truth or deceptive distractors. We note that the generated distractors avoid semantic ambiguity, partial validity, and trivial identifiability.
\end{itemize}

\vspace{-0.1cm}

\paragraph{Milestone Extraction.}
To address the limitations of standard answer-only evaluation, we introduce milestone extraction that facilitates the rigorous assessment of agent trajectories. We define a milestone as an essential evidence-acquisition or reasoning step required to justify the correct answer. Crucially, milestones are formulated to be tool-agnostic in that they specify \emph{what} evidence is needed rather than \emph{how} an agent should obtain it, ensuring a fair comparison across diverse agentic frameworks with varying toolsets or search paths. The milestone extraction process begins with annotators drafting a detailed ground-truth reasoning note that documents the human video investigation process to reach the answer. This descriptive note is subsequently abstracted into a structured set of evidence milestones. During this abstraction, each milestone is categorized by an evidence type, such as temporal localization, OCR/text reading, visual recognition, counting, exclusion checking, arithmetic, or verification. We provide more details on the milestone extraction process in Appendix~\ref{app:evidence-milestones}. Finally, these structured milestones serve as the reference evidence for the multi-step trajectory evaluation, which is detailed in Section~\ref{sec:evaluation-pipeline}.

\vspace{-0.1cm}

\paragraph{Taxonomy Annotation.}
Each video and question in our benchmark is annotated with a structured set of taxonomy metadata to support fine-grained analysis. Video-level taxonomy axes include \emph{genre}, \emph{duration}, \emph{production style}, and \emph{audio type}, while question-level taxonomy axes include \emph{required skills} and \emph{difficulty}. We provide more details regarding the taxonomy annotation process and the complete list of taxonomy labels in Appendix~\ref{app:taxonomy}.

\subsection{Benchmark Statistics}
\label{sec:statistics}

\begin{figure}[t]
  \centering

  \begin{subfigure}{0.48\textwidth}
    \centering
    \includegraphics[width=\linewidth]{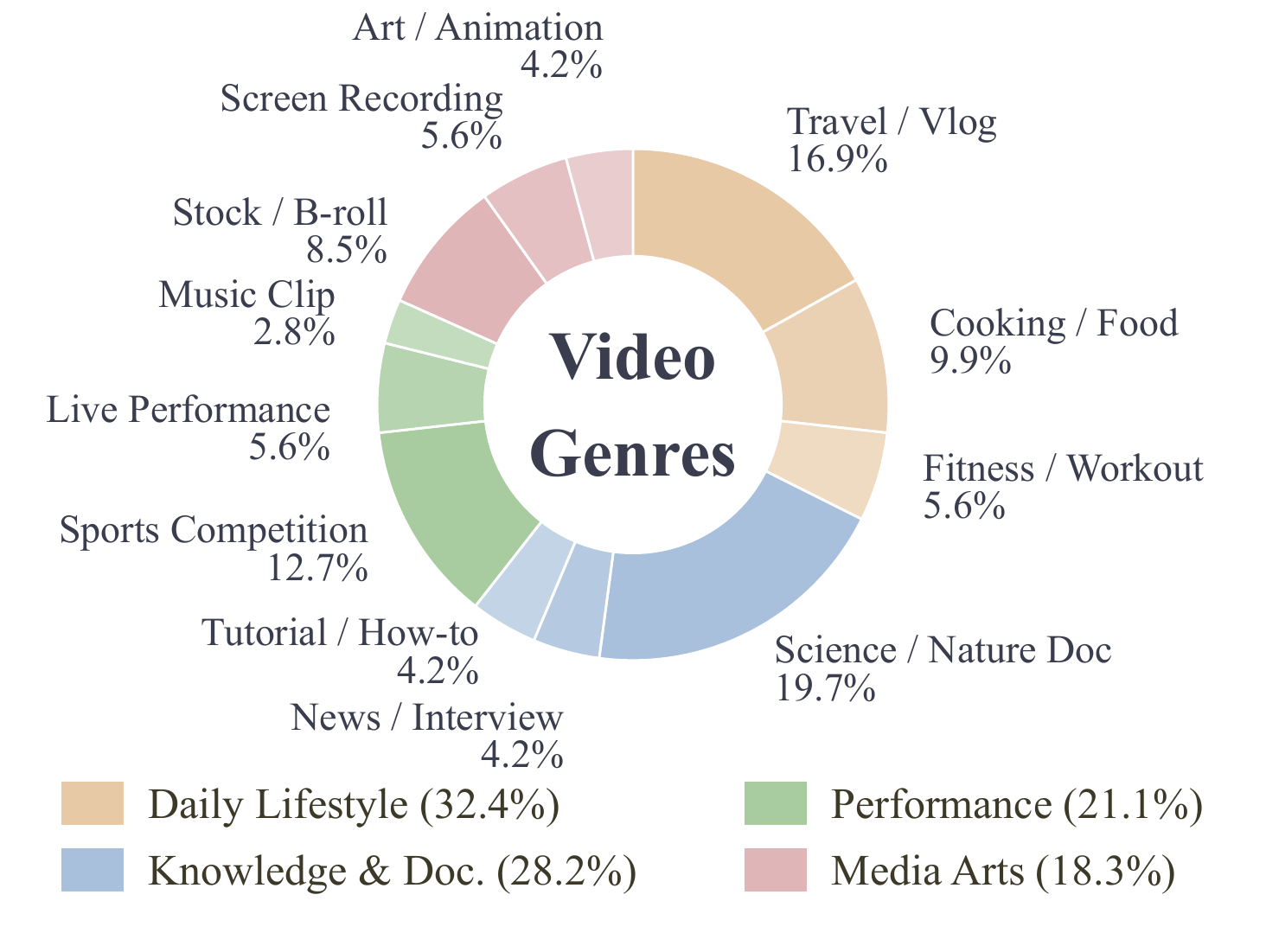}
    \caption{Video Genres.}
  \end{subfigure}
  \hfill
  \begin{subfigure}{0.48\textwidth}
    \centering
    \includegraphics[width=\linewidth]{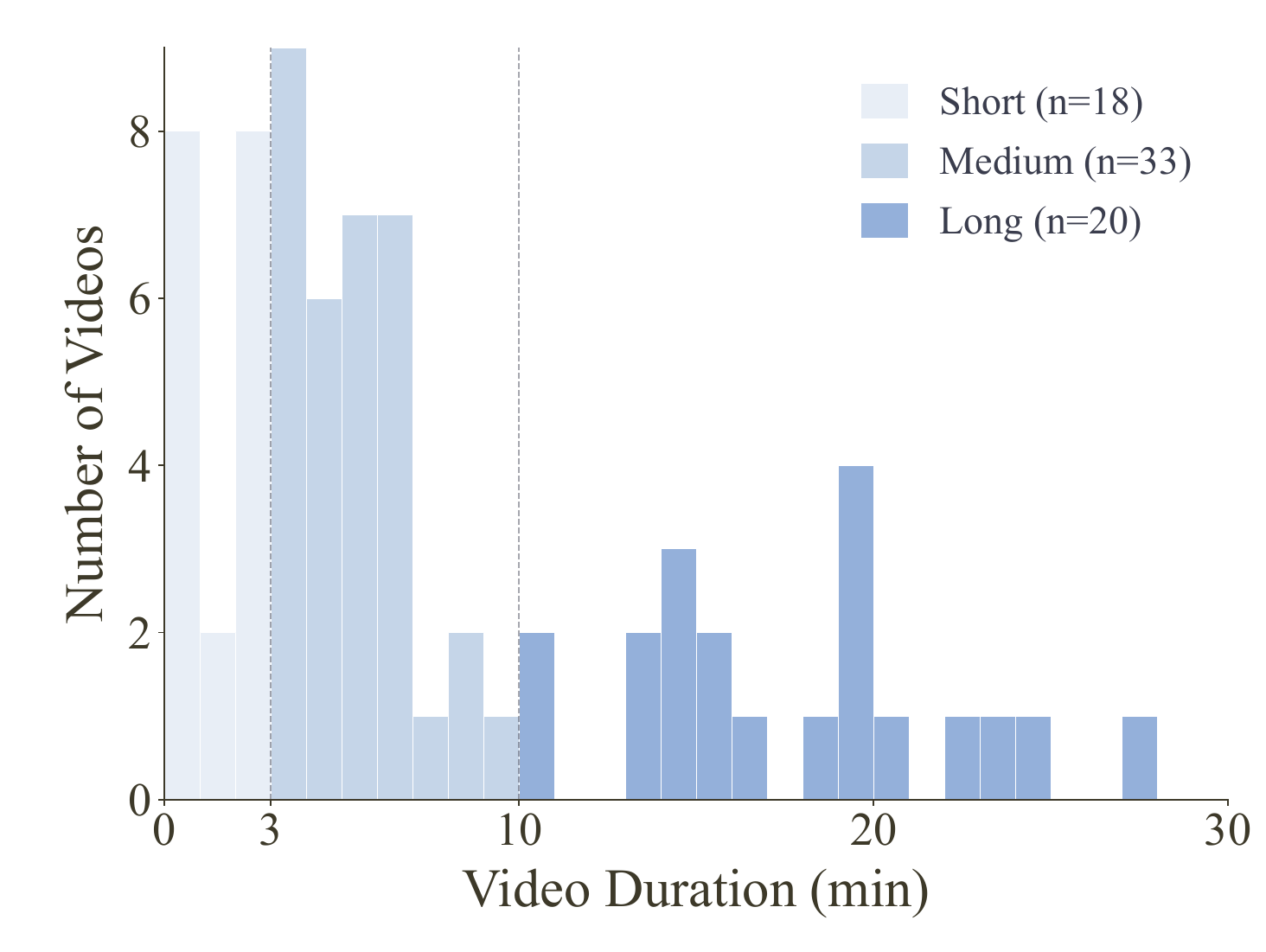}
    \caption{Video Durations.}
  \end{subfigure}

  \vspace{0.5em}

  \begin{subfigure}{0.48\textwidth}
    \centering
    \includegraphics[width=\linewidth]{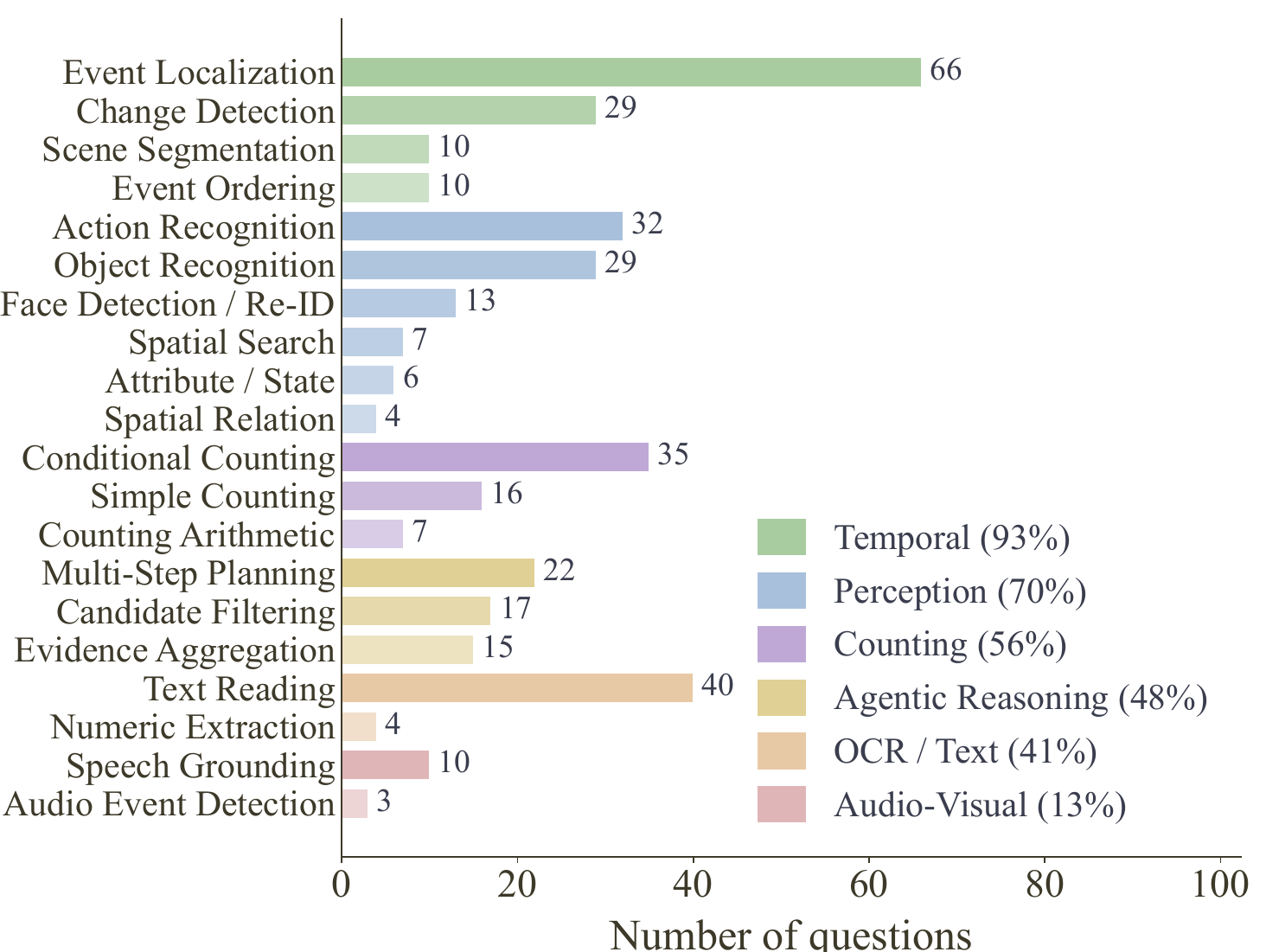}
    \caption{Required Skills.}
  \end{subfigure}
  \hfill
  \begin{subfigure}{0.48\textwidth}
    \centering
    \includegraphics[width=\linewidth]{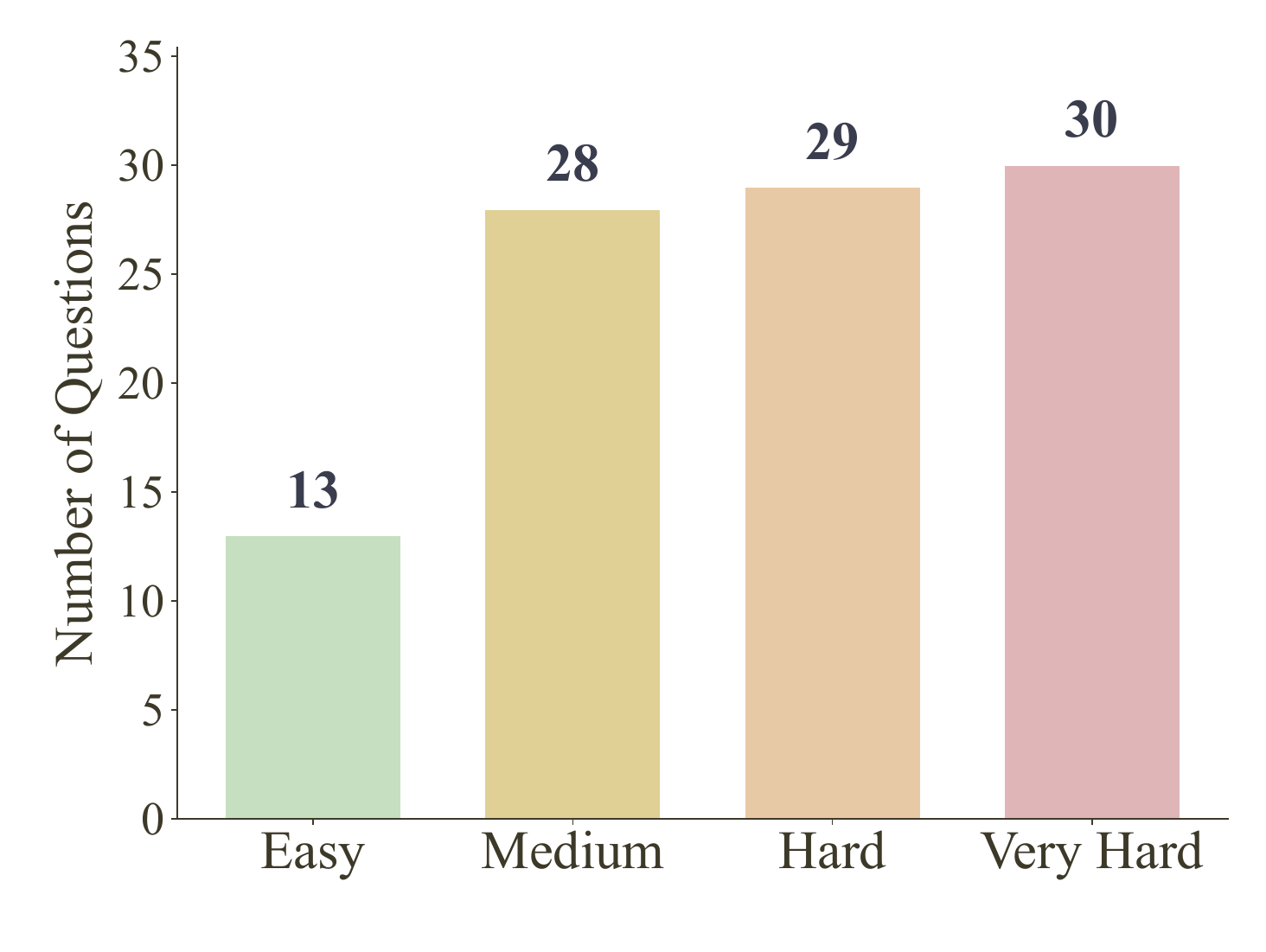}
    \caption{Question Difficulty.}
  \end{subfigure}

  \vspace{0.5em}
  \caption{
    AgentVidBench at a glance.
    Panels (a) and (b) summarize video-level statistics, including genre composition and duration distribution. Panels (c) and (d) summarize question-level statistics, including required skills and difficulty.
    Questions can have multiple skill tags, and the colors in (c) indicate the corresponding top-level capability groups.
    }
    \vspace{-0.4cm}
    \label{fig:dataset-statistics}
\end{figure}

We present the detailed statistics of AgentVidBench in Figure~\ref{fig:dataset-statistics}. At the video level, genres are hierarchically organized into four super-categories: \emph{Daily Lifestyle} (e.g., travel, cooking, fitness), \emph{Knowledge \& Documentary} (e.g., documentaries, news, tutorials), \emph{Performance} (e.g., sports, live performances), and \emph{Media Arts} (e.g., screen recordings, animations). Video durations span from 8 seconds to 27 minutes, with a median of approximately 5.5 minutes. The dataset encompasses short ($<3$ minutes), medium ($3$--$10$ minutes), and long ($>10$ minutes) clips, providing a diverse temporal distribution to evaluate both local video perception and long-context reasoning.

In addition, questions are multi-tagged with 20 fine-grained sub-skills across six top-level capabilities: \emph{temporal}, \emph{perception}, \emph{counting}, \emph{agentic reasoning}, \emph{OCR/text}, and \emph{audio--visual}. Demonstrating high compositionality, each query necessitates an average of 3.75 sub-skills (e.g., temporal localization coupled with multi-step planning). Furthermore, the 100 questions lean heavily toward challenging scenarios, distributed into 13 \texttt{easy}, 28 \texttt{medium}, 29 \texttt{hard}, and 30 \texttt{very hard} cases. 
We establish these difficulty tiers by synthesizing initial human judgments with the empirically observed performance of existing MLLMs and MLLM agents.
The statistical reliability of evaluation on AgentVidBench is analyzed in
Appendix~\ref{app:stat-reliability}.

\vspace{-0.1cm}
\subsection{Evaluation Pipeline}
\label{sec:evaluation-pipeline}

The evaluation pipeline of AgentVidBench consists of two stages: final-answer evaluation and trajectory evaluation. First, we measure standard final-answer accuracy by verifying whether the agent's ultimate prediction matches the ground truth. Next, we perform a milestone-based trajectory evaluation that compares each predicted trajectory against tool-agnostic evidence milestones. Drawing intuition from the process-level evaluation principles of MINERVA\,\cite{DBLP:conf/iccv/NagraniMIBMJHZVSSW25}, an LLM judge evaluates the predicted trajectory across five dimensions: Task Understanding (TU), Evidence Coverage (ECov), Evidence Grounding (EG), Evidence Completeness (ECom), and Reasoning Faithfulness (RF). We define the five dimensions by analyzing recurring failure modes observed in MLLM and MLLM agent trajectories. These five dimensions assess whether the agent understands the task constraints, acquires the necessary evidence, grounds its observations in the video, checks the completeness of the relevant temporal scope or candidate set when needed, and derives the answer consistently from the acquired evidence, respectively. 
We aggregate the five dimension scores into a unified and normalized trajectory score. This milestone-based approach simultaneously provides fine-grained diagnostics for agent failures and ensures fair, path-independent comparisons across diverse agentic frameworks. We provide more details on the evaluation pipeline and its validation in Appendix~\ref{app:evaluation-pipeline}.

\section{Experiments}

\vspace{-0.2cm}
\subsection{Settings}

\vspace{-0.1cm}
\paragraph{Models.}
To comprehensively evaluate diverse model families, we evaluate a broad suite of state-of-the-art (SOTA) MLLMs on our benchmark.
The evaluated models are grouped into proprietary and open-source MLLMs. 
For proprietary MLLMs, we include GPT-5\,\citep{DBLP:journals/corr/abs-2601-03267}, Claude Opus 4.7\,\citep{anthropic2026claudeopus47}, Gemini 2.5 Flash\,\citep{DBLP:journals/corr/abs-2507-06261}, and Gemini 2.5 Pro\,\citep{DBLP:journals/corr/abs-2507-06261}. 
For open-source MLLMs, we include Qwen3-VL-(4B, 8B)\,\citep{DBLP:journals/corr/abs-2511-21631}, Qwen3.5-(9B, 27B)\,\citep{qwen3.5}, and Gemma 4-(E2B, E4B, 26B-A4B, 31B)\,\citep{deepmind2026gemma4}.
In the single-turn MLLM evaluation setting, models process the video and generate an answer in a single pass, without iterative interaction or tool use. 
Following prior work~\citep{DBLP:journals/corr/abs-2604-05015}, we use 50 frames for single-turn evaluation of GPT-5 and Claude Opus 4.7, and adopt 1 fps sampling in all other settings (see more details in Appendix~\ref{app:video-input}).
In the MLLM agent evaluation setting~\citep{fan2025toolaugmented, DBLP:conf/nips/ZhangJGLLLL25, wang2025activevideoperceptioniterative}, the models serve as the core reasoning modules within various agentic video understanding frameworks, coordinating multi-step reasoning processes and determining when and how each framework should invoke external tools.

\begin{table*}[t!]
\centering
\caption{Performance comparison of proprietary and open-source MLLMs across
different agentic workflows on the AgentVidBench dataset. We report both Accuracy
(Acc.) and Trajectory (Traj.) scores for each configuration. Colored arrows next
to Video-ReAct (Ours) indicate absolute changes relative to the corresponding Single-Turn setting for the same model and metric.}
\small
\setlength{\tabcolsep}{5.65pt}
\renewcommand{\arraystretch}{1.12}
\begin{tabular}{@{}l cc cc cc cc cc@{}}
\toprule
\multirow{2}{*}{\textbf{Models}}
& \multicolumn{2}{c}{\textbf{Single-Turn}}
& \multicolumn{2}{c}{\textbf{STAR}~\cite{fan2025toolaugmented}}
& \multicolumn{2}{c}{\textbf{DVD}~\cite{DBLP:conf/nips/ZhangJGLLLL25}}
& \multicolumn{2}{c}{\textbf{AVP}~\cite{wang2025activevideoperceptioniterative}}
& \multicolumn{2}{@{\hspace{4pt}\kern\arrayrulewidth\hspace{4pt}}c}{\textbf{Video-ReAct (Ours)}} \\
\cmidrule(lr){2-3}\cmidrule(lr){4-5}\cmidrule(lr){6-7}\cmidrule(lr){8-9}\cmidrule(l){10-11}
& Acc. & Traj. & Acc. & Traj. & Acc. & Traj. & Acc. & Traj. & Acc. & Traj. \\
\midrule
\rowcolor{gray!18}
\multicolumn{11}{@{}l}{\textbf{Proprietary MLLMs}} \\
GPT-5              & 0.29 & 0.55 & 0.14 & 0.33 & 0.34 & 0.60 & 0.41 & 0.65 & \textbf{0.49} \textcolor{blue}{\scriptsize{$\uparrow$0.20}} & \textbf{0.76} \textcolor{blue}{\scriptsize{$\uparrow$0.21}} \\
Claude Opus 4.7    & 0.30 & 0.44 & 0.19 & 0.37 & 0.43 & 0.63 & 0.38 & 0.64 & \textbf{0.54} \textcolor{blue}{\scriptsize{$\uparrow$0.24}} & \textbf{0.79} \textcolor{blue}{\scriptsize{$\uparrow$0.35}} \\
Gemini 2.5 Flash   & \textbf{0.46} & 0.66 & 0.10 & 0.29 & 0.27 & 0.47 & \textbf{0.46} & 0.65 & 0.45 \textcolor{red}{\scriptsize{$\downarrow$0.01}} & \textbf{0.71} \textcolor{blue}{\scriptsize{$\uparrow$0.05}} \\
Gemini 2.5 Pro     & 0.51 & 0.71 & 0.13 & 0.35 & 0.30 & 0.47 & 0.56 & 0.71 & \textbf{0.58} \textcolor{blue}{\scriptsize{$\uparrow$0.07}} & \textbf{0.77} \textcolor{blue}{\scriptsize{$\uparrow$0.05}} \\
\midrule
\rowcolor{gray!18}
\multicolumn{11}{@{}l}{\textbf{Open-source MLLMs}} \\
\rowcolor{gray!7}
\multicolumn{11}{@{}l}{\quad\textit{Qwen family}} \\
Qwen3-VL-4B                & 0.16 & 0.34 & 0.07 & 0.22 & 0.08 & 0.24 & \textbf{0.38} & \textbf{0.68} & 0.37 \textcolor{blue}{\scriptsize{$\uparrow$0.21}} & 0.62 \textcolor{blue}{\scriptsize{$\uparrow$0.27}} \\
\quad\textit{+Think}       & 0.18 & 0.30 & 0.07 & 0.12 & 0.23 & 0.31 & \textbf{0.49} & \textbf{0.67} & 0.38 \textcolor{blue}{\scriptsize{$\uparrow$0.20}} & 0.65 \textcolor{blue}{\scriptsize{$\uparrow$0.36}} \\
Qwen3-VL-8B                & 0.18 & 0.41 & 0.04 & 0.21 & 0.10 & 0.22 & 0.38 & 0.63 & \textbf{0.40} \textcolor{blue}{\scriptsize{$\uparrow$0.22}} & \textbf{0.68} \textcolor{blue}{\scriptsize{$\uparrow$0.27}} \\
\quad\textit{+Think}       & 0.18 & 0.30 & 0.08 & 0.13 & 0.16 & 0.26 & \textbf{0.41} & 0.64 & 0.40 \textcolor{blue}{\scriptsize{$\uparrow$0.22}} & \textbf{0.70} \textcolor{blue}{\scriptsize{$\uparrow$0.41}} \\
Qwen3.5-9B                 & 0.19 & 0.37 & 0.08 & 0.20 & 0.32 & 0.50 & 0.34 & 0.66 & \textbf{0.40} \textcolor{blue}{\scriptsize{$\uparrow$0.21}} & \textbf{0.71} \textcolor{blue}{\scriptsize{$\uparrow$0.34}} \\
\quad\textit{+Think}       & 0.22 & 0.50 & 0.06 & 0.16 & 0.33 & 0.51 & \textbf{0.47} & 0.69 & 0.41 \textcolor{blue}{\scriptsize{$\uparrow$0.19}} & \textbf{0.72} \textcolor{blue}{\scriptsize{$\uparrow$0.22}} \\
Qwen3.5-27B                & 0.22 & 0.53 & 0.09 & 0.26 & 0.37 & 0.58 & 0.44 & 0.70 & \textbf{0.49} \textcolor{blue}{\scriptsize{$\uparrow$0.27}} & \textbf{0.78} \textcolor{blue}{\scriptsize{$\uparrow$0.25}} \\
\quad\textit{+Think}       & 0.25 & 0.57 & 0.09 & 0.27 & 0.38 & 0.53 & 0.46 & 0.71 & \textbf{0.50} \textcolor{blue}{\scriptsize{$\uparrow$0.25}} & \textbf{0.77} \textcolor{blue}{\scriptsize{$\uparrow$0.20}} \\
\midrule
\addlinespace[2pt]
\rowcolor{gray!7}
\multicolumn{11}{@{}l}{\quad\textit{Gemma family}} \\
Gemma-4-E2B                & 0.11 & 0.25 & 0.07 & 0.18 & 0.22 & 0.36 & 0.38 & \textbf{0.69} & \textbf{0.40} \textcolor{blue}{\scriptsize{$\uparrow$0.29}} & 0.67 \textcolor{blue}{\scriptsize{$\uparrow$0.42}} \\
\quad\textit{+Think}       & 0.07 & 0.25 & 0.08 & 0.19 & 0.30 & 0.28 & \textbf{0.53} & \textbf{0.71} & 0.38 \textcolor{blue}{\scriptsize{$\uparrow$0.31}} & 0.68 \textcolor{blue}{\scriptsize{$\uparrow$0.43}} \\
Gemma-4-E4B                & 0.16 & 0.35 & 0.05 & 0.22 & 0.33 & 0.45 & 0.44 & 0.68 & \textbf{0.51} \textcolor{blue}{\scriptsize{$\uparrow$0.35}} & \textbf{0.74} \textcolor{blue}{\scriptsize{$\uparrow$0.39}} \\
\quad\textit{+Think}       & 0.14 & 0.39 & 0.07 & 0.22 & 0.24 & 0.44 & \textbf{0.46} & 0.67 & 0.40 \textcolor{blue}{\scriptsize{$\uparrow$0.26}} & \textbf{0.67} \textcolor{blue}{\scriptsize{$\uparrow$0.28}} \\
Gemma-4-26B-A4B            & 0.21 & 0.44 & 0.09 & 0.25 & 0.31 & 0.55 & \textbf{0.51} & 0.71 & 0.49 \textcolor{blue}{\scriptsize{$\uparrow$0.28}} & \textbf{0.73} \textcolor{blue}{\scriptsize{$\uparrow$0.29}} \\
\quad\textit{+Think}       & 0.22 & 0.43 & 0.10 & 0.24 & 0.37 & 0.56 & 0.52 & 0.72 & \textbf{0.53} \textcolor{blue}{\scriptsize{$\uparrow$0.31}} & \textbf{0.76} \textcolor{blue}{\scriptsize{$\uparrow$0.33}} \\
Gemma-4-31B                & 0.38 & 0.59 & 0.08 & 0.28 & 0.31 & 0.59 & 0.48 & 0.70 & \textbf{0.51} \textcolor{blue}{\scriptsize{$\uparrow$0.13}} & \textbf{0.75} \textcolor{blue}{\scriptsize{$\uparrow$0.16}} \\
\quad\textit{+Think}       & 0.37 & 0.61 & 0.10 & 0.29 & 0.40 & 0.58 & \textbf{0.49} & 0.71 & 0.48 \textcolor{blue}{\scriptsize{$\uparrow$0.11}} & \textbf{0.75} \textcolor{blue}{\scriptsize{$\uparrow$0.14}} \\
\bottomrule
\end{tabular}
\vspace{-15pt}
\label{tab:main_results}
\end{table*}

\begin{figure}[t]
  \centering
  \includegraphics[width=\textwidth, trim=0 0.2cm 0 0]{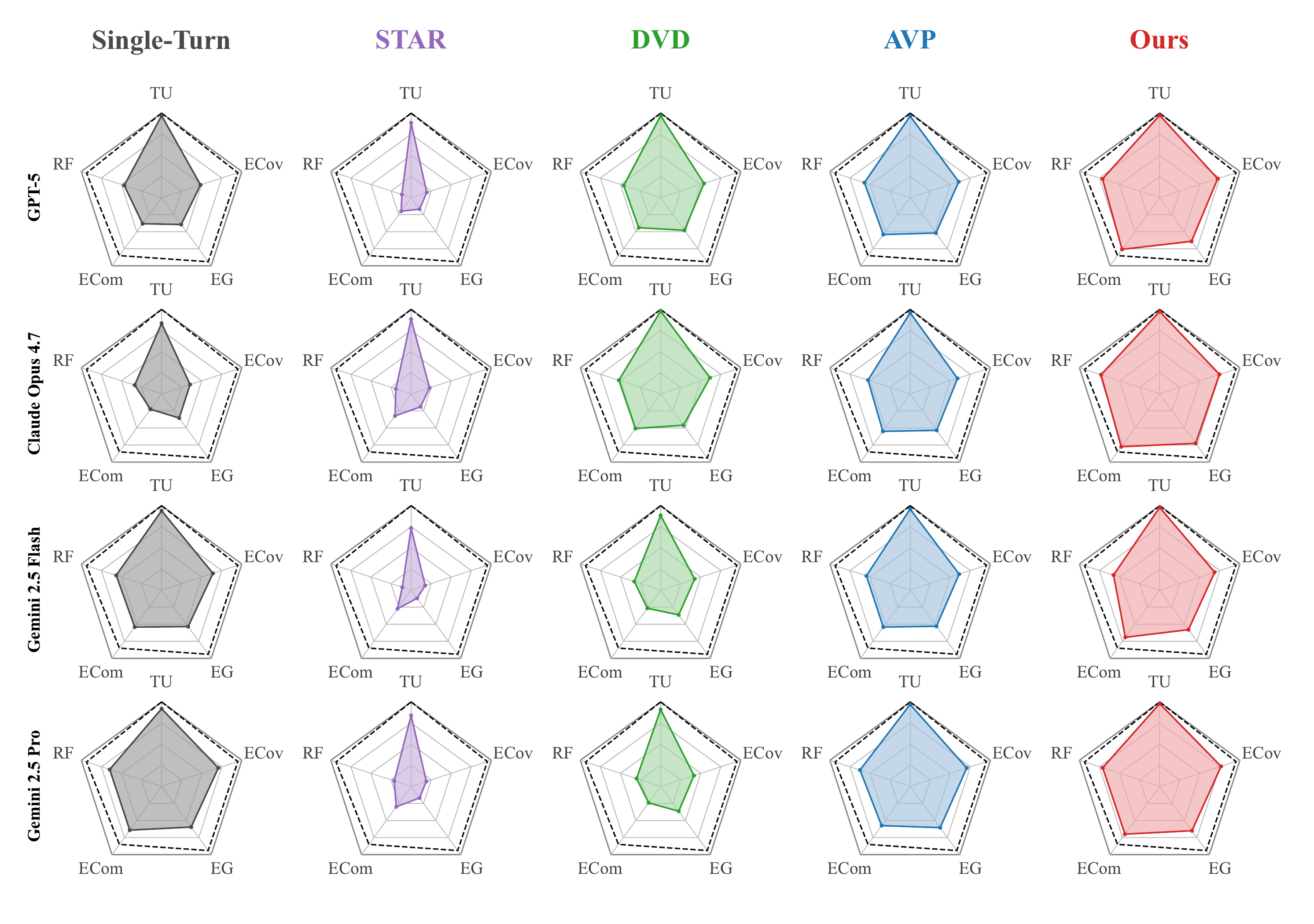}
  \vspace{-0.7cm}
  \caption{
  Trajectory score profiles of vanilla MLLMs and agentic frameworks. Each radar plot shows the average scores across five trajectory evaluation dimensions: Task Understanding (TU), Evidence Coverage (ECov), Evidence Grounding (EG), Evidence Completeness (ECom), and Reasoning Faithfulness (RF). The dashed outline indicates the human reference score, where larger filled regions represent stronger trajectory quality.
  }
  \vspace{-7pt}
  \label{fig:traj_results}
\end{figure}

\vspace{-0.4cm}
\paragraph{Agentic Frameworks.}
To evaluate our benchmark across diverse agentic design patterns, we consider frameworks that organize video evidence and guide visual perception in different ways. 
Throughout the evaluation, we replace only the main agent while keeping the original tools and execution pipeline of each method unchanged, thereby isolating how agent-level planning shapes video inspection under the same setting. 
STAR~\cite{fan2025toolaugmented} represents structured scheduling, alternating temporal and spatial tool use to narrow question-relevant regions. 
DVD~\cite{DBLP:conf/nips/ZhangJGLLLL25} follows search-centric planning over multi-granular evidence, combining global browsing, clip retrieval, and frame inspection. 
AVP~\cite{wang2025activevideoperceptioniterative} adopts active-perception planning, deciding what, where, and how to observe with reflection-based refinement. We also include our proposed simple yet effective task-adaptive reference agent, Video-ReAct, a lightweight ReAct-style baseline~\cite{DBLP:conf/iclr/YaoZYDSN023} designed for efficient evidence-seeking via controllable video inspection and transcript access.
Video-ReAct (Ours) operates as a single-agent ReAct loop in which the agent iteratively gathers evidence and judges its sufficiency using video observation tools, before finally producing the answer (see more details in Appendix~\ref{app:ours}).
Together, this diverse suite of baselines tests whether the benchmark remains effective across structured scheduling, search-centric retrieval, active observation, and task-adaptive planning.

\vspace{-0.2cm}
\paragraph{Metrics.}
We evaluate the performance of both MLLMs and MLLM agents primarily using standard accuracy for multiple-choice questions, which directly measures the correctness of the final outcome. Since a correct final answer does not inherently guarantee a sound reasoning process, we also evaluate trajectory scores for agentic workflows. This additional metric ensures that the final deduction is strictly grounded in a valid evidence-gathering process rather than mere chance. 

\subsection{Performance of Vanilla MLLMs and MLLM Agents}
We compare the performance of vanilla MLLMs and MLLM agents on AgentVidBench in terms of accuracy and trajectory score in Table~\ref{tab:main_results}.
We also present detailed trajectory scores across five dimensions as shown in Figure~\ref{fig:traj_results} and Appendix~\ref{app:detailed-trajectory-profiles}.
First, the results of different open-source MLLM agents under the same agentic framework show that performance varies depending on the main agent, even with a fixed set of tools.
Considering that the main agent does not directly observe the video and relies entirely on external tools for visual interpretation, these performance differences suggest that planning ability can substantially influence video understanding even under the same perceptual capacity.
However, as in the case of STAR, when the recognition capability of the external tools is insufficient, the agentic system can even underperform the single-turn performance of the vanilla MLLM itself.
We further compare different frameworks and observe that performance tends to improve as the main agent is given greater flexibility in deciding what to observe and where to focus.
For example, although both AVP and our method employ Gemini 2.5 Pro as a video observation tool, our method achieves better performance by allowing the agent to actively select the observation target and location, rather than following the fixed plan--observe--reflect pipeline adopted by AVP. We provide more details in Appendix~\ref{app:tool-budget}.
Overall, these results suggest that strong performance on AgentVidBench requires not only capable visual tools but also effective planning and adaptive observation strategies, making it a suitable benchmark for evaluating agentic frameworks in complex video understanding. We provide additional analyses of milestone coverage rates and dominant failure modes in Appendices~\ref{app:milestone-coverage-rate} and \ref{app:failure-modes}, respectively.

\subsection{Ablation Study on Adaptive Video Inspection}

\begin{wraptable}{r}{0.585\linewidth}
\vspace{-13pt}
\centering
\caption{Ablation study on adaptive video-inspection controls using Gemini 2.5 Pro.}
\vspace{-5pt}
\small
\begin{tabular}{@{}l ccc cc cc@{}}
\toprule
& \multicolumn{3}{c}{\textbf{Adaptive Controls}}
& \multicolumn{2}{c}{\textbf{AVP}}
& \multicolumn{2}{c}{\textbf{Ours}} \\
\cmidrule(lr){2-4}
\cmidrule(lr){5-6}
\cmidrule(lr){7-8}
\textbf{Settings} & FPS & Res. & Int. & Acc. & Traj. & Acc. & Traj. \\
\midrule
W/o FPS      & 1 & \checkmark & \checkmark & 0.50 & 0.69 & 0.50 & 0.73 \\
W/o Res.     & \checkmark & low & \checkmark & 0.49 & 0.68 & 0.56 & 0.73 \\
W/o Res.     & \checkmark & med & \checkmark & 0.43 & 0.66 & 0.55 & 0.77 \\
W/o Int.     & \checkmark & \checkmark & full & 0.46 & 0.68 & 0.50 & 0.72 \\
\midrule
Full         & \checkmark & \checkmark & \checkmark & 0.56 & 0.71 & 0.58 & 0.77 \\
\bottomrule
\end{tabular}
\label{tab:ablation-modules}
\vspace{-10pt}
\end{wraptable}

To better understand which types of video control matter for agentic video understanding within AgentVidBench, we ablate the main tool-control axes available to the agent when deciding how to inspect the video.
These axes correspond to complementary ways of inspecting video evidence: \emph{fps} controls temporal granularity (FPS), \emph{resolution} controls spatial fidelity (Res.), and \emph{time interval} controls which portion of the video is observed (Int.). 
All ablations use Gemini~2.5~Pro on the same evaluation set, and we report final-answer accuracy and trajectory score.

The results in Table~\ref{tab:ablation-modules} show that removing adaptive control generally reduces accuracy, while the impact of each control axis varies across agentic frameworks. Specifically, while the resolution and time interval tools play a dominant role in the AVP framework, all three tools demonstrate comparable importance in our framework. 
These results indicate that AgentVidBench is effective for evaluating the gains obtained from adaptive inspection compared to passive or uniformly allocated observation.

\subsection{Efficiency Analysis}

\begin{wraptable}{r}{0.49\linewidth}
\vspace{-14.5pt}
\caption{
Accuracy vs. token efficiency. Eff. denotes token efficiency, computed as accuracy (Acc.) divided by million tokens (MTok.).
}
\vspace{-5pt}
\centering
\small
\setlength{\tabcolsep}{4pt}
\begin{tabular*}{\linewidth}{@{\extracolsep{\fill}}lccc}
\toprule
\textbf{Methods} & \textbf{Acc.} & \textbf{MTok.} & \textbf{Eff. (\%)} \\
\midrule
Single-Turn & 0.51 & 15.16 & 3.36 \\
AVP         & 0.56 & 10.60 & 5.28 \\
Ours        & 0.58 & 15.44 & 3.76 \\
\bottomrule
\end{tabular*}
\label{tab:token_efficiency}
\vspace{-10pt}
\end{wraptable}

We report performance per token in Table~\ref{tab:token_efficiency}, defining token efficiency as accuracy divided by the total number of tokens measured in millions.
Overall, single-turn baseline shows the lowest efficiency, achieving a score of 3.36\%.
This is because it processes the entire video regardless of which segments are relevant for solving the problem.
In contrast, agentic methods can use tools to selectively inspect the relevant parts of the video, leading to larger performance gains per token.
AVP achieves the highest token efficiency of 5.28\%, whereas our method achieves the best accuracy of 58\% while remaining more efficient than the single-turn baseline.
This result indicates that tool-based reasoning improves token utilization, demonstrating its effectiveness from an efficiency perspective as well.

\vspace{-0.1cm}
\section{Conclusion}
In this work, we introduce AgentVidBench, a multi-hop video question answering benchmark designed to evaluate the spatial, temporal, and causal reasoning abilities of MLLM agents.
By combining challenging compositional questions with step-by-step evidence milestones, our benchmark enables evaluation beyond final-answer accuracy and supports fine-grained analysis of agent reasoning trajectories.
Through extensive experiments, we show that existing single-turn MLLMs still struggle with multi-hop video understanding, while effective agentic workflows can improve both answer accuracy and evidence-grounded reasoning quality.
Given the growing interest in MLLM agents and multimodal video tasks, we believe AgentVidBench can serve as a valuable benchmark for future research on multimodal agent reasoning. Furthermore, exploring methods that automatically learn optimal harnesses or weights\,\cite{DBLP:journals/corr/abs-2603-28052, kim2026whale}, as well as training small-scale MLLMs via reinforcement learning, remains a promising direction for future work.

\paragraph{Limitations.}
Although AgentVidBench enables rigorous evaluation through carefully curated questions paired with fine-grained trajectories, its current coverage remains limited. 
This limitation primarily stems from two factors: the strict requirement to use openly licensed videos and the labor-intensive nature of designing questions that require multi-step reasoning and constructing validated trajectories.
To address this limitation, future work will explore automated data collection methods for scaling AgentVidBench without compromising annotation quality.

\bibliographystyle{abbrv}
\bibliography{neurips_2026}

\clearpage
\newpage
\appendix

\section{Appendix -- Comparison of AgentVidBench with Existing Benchmarks}
\label{sec:comparison_agentvidbench}

We note that AgentVidBench uniquely evaluates active, tool-aware agent trajectories, distinguishing it from existing datasets. Although VideoMME\,\cite{DBLP:conf/cvpr/FuDLLRZWZSZCLLZ25}, LongVideoBench\,\cite{DBLP:conf/nips/WuLCL24}, and MINERVA\,\cite{DBLP:conf/iccv/NagraniMIBMJHZVSSW25} are all Video QA datasets, VideoMME and LongVideoBench are limited to evaluating final-answer accuracy, while MINERVA is restricted to the static reasoning of passive, single-pass MLLMs. In contrast, AgentVidBench specifically measures an agent's ability to dynamically invoke tools and explicitly gather evidence using tool-agnostic evidence milestones. A comprehensive comparison summary is provided in Table~\ref{tab:benchmarks_pro}.
\vspace{-0.2cm}
\begin{table}[htbp]
\centering
\caption{Comparison of video benchmarks.}
\label{tab:benchmarks_pro}
\begin{tabular}{lccc}
\toprule
\textbf{Benchmarks} & \textbf{Video QA} & \textbf{Reasoning Trajectory} & \textbf{Tool-aware Agent Trajectory} \\ 
\midrule
VideoMME       & \checkmark & $\times$   & $\times$   \\
LongVideoBench & \checkmark & $\times$   & $\times$   \\
MINERVA        & \checkmark & \checkmark & $\times$   \\
\midrule
AgentVidBench  & \checkmark & \checkmark & \checkmark \\ 
\bottomrule
\end{tabular}
\end{table}

Our controlled experiments demonstrate that MINERVA's evaluation metric (MiRA) is unsuitable for diverse agentic workflows, as it unfairly penalizes valid reasoning trajectories simply for utilizing different tool log formats (e.g., frame indices instead of MM:SS timestamps). In contrast, our tool-agnostic milestone judge robustly evaluates the actual evidence acquired regardless of the specific tool interface, while still strictly penalizing genuine evidence omission. This empirical robustness proves that AgentVidBench provides a fairer and more accurate evaluation protocol for assessing MLLM agents.
\section{Appendix -- Theoretical Details}
\label{sec:theoretical_details}

\subsection{Formal Fixed-Depth Consequence}
\label{sec:fixed_depth_consequence}

The informal corollary in the main text states that a fixed-depth, single-pass
transformer cannot solve arbitrarily deep worst-case multi-hop reasoning tasks
in a single forward pass. We formalize this statement by directly applying Theorem~5.1 of
Yao et al.~\citep{DBLP:conf/emnlp/YaoDZHK25}, using their \(k\)-hop relation-composition setting and assumptions.

\begin{corollary}[Formal version of Corollary~\ref{cor:informal_fixed_depth}]
\label{cor:fixed_depth_khop}
Consider any fixed causal transformer architecture with \(L\) layers, \(H\)
attention heads, hidden dimension \(d\), and \(p\)-bit precision. Under the
query-independent-attention assumption in
Theorem~\ref{thm:yao_khop_lower_bound}, there exists a sufficiently large hop
count \(k\), a sufficiently large entity universe \(E\), and a relation set
\(\mathcal{R}\subseteq E^E\) such that this transformer cannot solve the
corresponding worst-case \(k\)-hop relation-composition task in a single forward
pass.
\end{corollary}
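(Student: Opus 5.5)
The plan is to derive the corollary directly from Theorem~\ref{thm:yao_khop_lower_bound}, i.e., Theorem~5.1 of Yao et al.~\citep{DBLP:conf/emnlp/YaoDZHK25}. First we restate its setting. In the \(k\)-hop relation-composition task we have an entity universe \(E\), relations \(r_1,\dots,r_k\in\mathcal{R}\subseteq E^E\), and a query entity \(e_0\). Each relation is presented in context as a list of fact tokens. The target output is \(r_k\circ\cdots\circ r_1(e_0)\).

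Under the query-independent-attention assumption, that theorem gives a lower bound on the resources needed by a causal transformer with \(L\) layers, \(H\) heads, width \(d\) and \(p\)-bit precision. The bound has the following shape: solving the task requires that some quantity depending only on \((L,H,d,p)\) be at least a function of \(k\) and \(|E|\). Equivalently, the depth must grow with \(k\) (roughly \(L=\Omega(\log k)\) or larger, depending on the exact form), provided the universe is large enough that the relations cannot be compressed into the \(O(Hdp)\)-bit per-position state. We will quote the exact inequality verbatim, as the form \(f(L,H,d,p)\ge g(k,|E|)\) with \(g\to\infty\) as \(k\to\infty\) when \(|E|\) is suitably large.

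Next, we fix an arbitrary architecture, so that \(L,H,d,p\) are constants, and take the contrapositive. Since \(f(L,H,d,p)\) is a fixed finite number, we choose \(k\) large enough that \(g(k,|E|)>f(L,H,d,p)\). We also choose \(|E|\) at least the threshold the theorem requires for that \(k\), for example \(|E|\ge\mathrm{poly}(k)\) or whatever its hypotheses demand. Finally, we take \(\mathcal{R}\) to be the relation class used in their hard instance, such as all of \(E^E\) or the permutations of \(E\). Because the theorem is stated as an impossibility over worst-case inputs from this class, the fixed transformer must then err on some instance \((r_1,\dots,r_k,e_0)\). This is exactly the claim, with "single forward pass" meaning that the answer token is read off without intermediate generated tokens, which is the regime of Theorem~5.1.

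The main obstacle is bookkeeping rather than new mathematics. We must check three things. First, the hypotheses of Theorem~5.1 are met: causal masking, bounded precision, and query-independent attention as formalized there. Second, our existential choice of \(k\), \(E\) and \(\mathcal{R}\) matches their quantifier order; in particular, whether \(|E|\) must scale with \(k\) or with \(d\) and \(p\). Third, if their bound is stated asymptotically, we need to extract explicit constants, or argue that for fixed \((L,H,d,p)\) the asymptotic inequality eventually fails. If their statement only bounds \(L\) in terms of \(k\), the argument is even simpler: pick \(k\) with the required depth exceeding \(L\).
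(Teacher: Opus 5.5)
Your plan is correct and matches the paper's proof: Theorem~\ref{thm:yao_khop_lower_bound} gives \(L \ge k/(8pdH)\) under the hypothesis \(k \le |E|-2\), so your ``simpler case'' is the one that applies — choose an integer \(k > 8pdHL\), then any \(E\) with \(|E| \ge k+2\), and take the relation set \(\mathcal{R}\) the theorem supplies, so the required depth exceeds \(L\). Your hedging about an \(\Omega(\log k)\) or \(|E|\)-dependent bound was unnecessary, since the theorem is stated explicitly in the paper and \(E\) only has to satisfy \(k \le |E|-2\).
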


This corollary follows immediately from the following known lower bound.

\subsection{Known \(k\)-Hop Depth Lower Bound}
\label{sec:known_khop_lower_bound}

The theorem below is Theorem~5.1 of Yao et al.~\citep{DBLP:conf/emnlp/YaoDZHK25}, stated here for completeness.
It formalizes a worst-case lower bound for implicit \(k\)-hop relation
composition in a single transformer forward pass.

\begin{theorem}[Worst-case \(k\)-hop lower bound]
\label{thm:yao_khop_lower_bound}
Let \(E\) be an entity universe and let \(k \le |E|-2\). Then there exists a relation
set \(\mathcal{R}\subseteq E^E\) such that the following holds.

Consider a causal transformer operating with \(p\)-bit precision, hidden
dimension \(d\), \(H\) attention heads, and \(L\) layers. Suppose the model
solves the \(k\)-hop relation-composition task over \(E\) and
\(\mathcal{R}\), where the target answer is
\[
    r_k \circ r_{k-1} \circ \cdots \circ r_1(e).
\]
Assume that the attention patterns do not depend on the query \(e\). These attention patterns may
still depend on the relation sequence \(r_1,\ldots,r_k\). Then
\[
    L \ge \frac{k}{8pdH}.
\]
\end{theorem}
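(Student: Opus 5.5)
The plan is to reduce the statement to a pointer-chasing style information-flow bound. The key lever is the assumption that attention patterns are independent of the query $e$: once the relation sequence $r_1,\ldots,r_k$ is fixed, every attention weight is a constant. The forward pass, viewed as a function of $e$ alone, is then a fixed layered network. Information about $e$ can reach the output only through the $e$-dependent states along the attention graph, and each such state is a vector of $d$ numbers at $p$ bits, so it carries at most $pd$ bits. I would take the $k$ relations to be laid out as tokens in the context, the query $e$ as a token, and the answer read off at the last position.

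First I would construct the hard relation set $\mathcal{R}$. The hypothesis $k\le |E|-2$ lets me build chains of $k+1$ distinct entities plus one spare. I would take $\mathcal{R}$ to be a family of functions (for example, maps that fix all but a controlled set of points, or permutations along a path) with two properties. The composition $r_k\circ\cdots\circ r_1(e)$ should trace a path through distinct entities. Changing any single $r_i$ at the entity currently reached at step $i-1$ should change the answer. This makes the task a genuine sequential pointer chase: the correct entry of $r_i$ to consult is determined only after $r_{i-1}\circ\cdots\circ r_1(e)$ is known.

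Next I would define, for each layer $\ell$, the "resolved depth" $h_\ell$: the largest $i$ such that some position's state after layer $\ell$ determines $r_i\circ\cdots\circ r_1(e)$ as a function of $e$. The core step is to show $h_{\ell+1}-h_\ell\le 8pdH$. At layer $\ell+1$, a position aggregates $H$ heads with fixed weights, so its new state depends on $e$ only through at most $H$ fixed linear combinations of previous-layer states. These are quantized to $p$ bits in dimension $d$, so the $e$-dependent information it receives is at most $pdH$ bits. The MLP is a pointwise function and cannot add $e$-information. Advancing the chase by one hop requires "reading" the relevant row of the next relation at a location that depends on $e$, while the attention locations are not allowed to depend on $e$. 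I would argue by counting, over choices of the relation entries in $\mathcal{R}$, that each new hop forces $\Omega(1)$ fresh $e$-dependent bits to be transported from a relation token. That caps the number of hops per layer at $O(pdH)$, and tracking constants should give the factor $8$. Since $h_0\le 1$ and $h_L\ge k$ is needed, this yields $L\ge k/(8pdH)$.

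The main obstacle is making the per-layer step rigorous. A single head with fixed weights can average many value vectors, each already depending on $e$, so I must rule out the model "precomputing" long compositions in parallel (a doubling trick) at relation positions before $e$ is involved. Here query-independence alone is not enough, and the choice of $\mathcal{R}$ must do the work. I would first try an adversary/counting argument: if too many hops were resolved in one layer, then the fixed $pdH$-bit bottleneck would induce a map from relation configurations to messages that cannot be injective on the configurations distinguishing the answers. Only if that fails would I fall back on the known multi-round pointer-chasing communication lower bounds.
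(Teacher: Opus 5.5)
The paper gives no proof of its own here; it cites Theorem~5.1 of Yao et al. Your proposal therefore has to stand alone, and it has a genuine gap. Everything rests on the per-layer inequality $h_{\ell+1}-h_\ell\le 8pdH$, which you leave unproved, and the mechanism you offer for it does not hold. Query-independent attention does not prevent $e$-dependent reads. A position that has already seen $e$ can gate its own value (emit a bit or zero), and a later fixed average then acts as an $e$-dependent lookup.

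Concretely, take $E=\mathbb{Z}_n\cup\{\bot\}$ with $n\ge k+1$. Let $f_0(x)=x+1$, and let $f_1$ equal $f_0$ except $f_1(0)=\bot$, with both fixing $\bot$. Chains visit distinct entities, and $r_k\circ\cdots\circ r_1(e)=\bot$ iff $r_{i(e)}=f_1$. Here $i(e)$ is the step (if any, among $1,\dots,k$) at which $r_i$ is applied to the entity $0$. If relation positions can attend to $e$ (e.g.\ $e$ comes first), two layers with completely fixed attention solve this. Position $i$ copies $e$ and raises a flag iff $i=i(e)$ and $r_i=f_1$; the last position then averages the flags. This needs only $p=O(\log k)$, so for constant $d,H$ we get $k/(8pdH)=\Omega(k/\log k)\gg 2$, while your resolved depth jumps to $k$ in the second layer.

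So counting over a path-type family with the properties you list cannot give the per-layer cap. The bandwidth bound you do invoke (at most $pdH$ bits of $e$-dependent information) controls the wrong quantity, since $e$ itself carries only $\log|E|$ bits. The pointer-chasing fallback does not fit either. It concerns pointers written in the context and gives $|E|$-dependent round/communication trade-offs, not a bound of the form $k/(8pdH)$.

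What is missing is an encoding (one-way communication) argument. It requires the relation-side computation to be free of $e$, e.g.\ the query entity is read after $r_1,\dots,r_k$. Then, by query-independence, at each layer and head the query position receives two things:
\begin{itemize}
\item its own value vector with an $e$-independent weight;
\item the $e$-independent partial sum $\sum_j a_j(r)\,V h_j(r)$ over the relation positions.
\end{itemize}
So its entire computation is a function of $e$ and a message that depends only on $r$ and has at most $c\,pdH$ bits per layer, for a small constant $c$. Correctness for every $e$ forces this message to determine the whole composite function. For the family above, the $2^k$ relation sequences give pairwise distinct composites, since $r_i$ can be read off at $e\equiv 1-i\pmod n$; this is possible because $|E|\ge k+2$ allows $n\ge k+1$. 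Hence $c\,pdHL\ge k$, with no potential function or hop-by-hop accounting.

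Finally, for the statement exactly as quantified here, with $\mathcal R$ unrestricted, there is an even shorter proof. Take $\mathcal R=E^E$. Two relations with the same $pd$-bit input representation are indistinguishable: place them as $r_1$, choose $e$ where they differ, and set the other $r_i=\mathrm{id}$. So $pd\ge |E|\log_2|E|>k$, and the required bound $k/(8pdH)$ is below $1$.
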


\begin{proof}[Reference]
This is Theorem~5.1 of Yao et al.~\citep{DBLP:conf/emnlp/YaoDZHK25}, restated
in our notation.
\end{proof}

\subsection{Proof of Corollary~\ref{cor:fixed_depth_khop}}
\label{sec:proof_fixed_depth_consequence}

\begin{proof}
Fix the given transformer architecture, and let \(L,p,d,H\) denote its depth,
precision, hidden dimension, and number of attention heads. Choose an integer
\(k\) such that
\[
    k > 8pdHL.
\]
Then choose an entity universe \(E\) large enough that
\[
    k \le |E|-2.
\]
By Theorem~\ref{thm:yao_khop_lower_bound}, there exists a relation set
\(\mathcal{R}\subseteq E^E\) such that solving the corresponding \(k\)-hop
relation-composition task requires
\[
    L \ge \frac{k}{8pdH}.
\]
However, by our choice of \(k\),
\[
    \frac{k}{8pdH} > L.
\]
Thus the fixed \(L\)-layer transformer does not have sufficient depth to solve
this worst-case \(k\)-hop relation-composition task in a single forward pass
under the stated assumption. This proves the corollary.
\end{proof}

\subsection{Interpretation and Implication for Multi-Hop Video QA}
\label{sec:theory_interpretation}

Corollary~\ref{cor:fixed_depth_khop} is the formal consequence we need for our
benchmark motivation: for any fixed transformer architecture, there exists a
worst-case \(k\)-hop relation-composition task that cannot be solved in a single
forward pass once \(k\) is sufficiently large. The supporting lower bound can be
written as
\[
    LpdH = \Omega(k),
\]
where \(L\) is the number of transformer layers, while \(p,d,H\) capture
per-layer capacity through numerical precision, hidden dimension, and the number of
attention heads. Thus, the result should be read as a width--depth--precision
tradeoff: increasing per-layer capacity can partially compensate for depth, but for a
fixed architecture under single-pass inference---without externalized
intermediate states, recurrence, or other adaptive test-time computation---the
required depth grows with the number of composed relations.

The formal relation-composition task is not itself a theorem about video QA.
Rather, we use it as a theoretical abstraction of a broader computational
bottleneck: fixed-depth single-pass transformers have bounded internal
sequential computation, and sufficiently deep dependent reasoning can exceed
this budget. This perspective motivates our benchmark design.

In video QA, the analogous difficulty arises when answering a question requires
a sequence of dependent evidence updates. A model may need to acquire evidence
from one part of the video, use that evidence to guide the search for another
part, verify a relation between objects or events, and then update its answer
based on the new evidence. A single-pass VLM must compress this entire process
into one fixed-depth forward pass. Following this motivation, we construct AgentVidBench around this regime.

\section{Appendix -- AgentVidBench Construction Details}
\label{app:benchmark-construction}

We provide additional details for Section~\ref{sec:benchmark-construction}, including the concrete configuration of every stage used to build AgentVidBench.

\subsection{Video Collection}
\label{app:video-sources}

AgentVidBench contains 71 videos collected from publicly accessible or author-owned sources. Publicly sourced videos are primarily drawn from YouTube and Wikimedia Commons, with additional clips from NASA, DVIDS, Vimeo, Current Biology, and the US National Park Service. We also include four self-recorded videos. Table~\ref{tab:video-sources} summarizes the distribution of source platforms.

For each video, we manually checked the license or reuse terms before inclusion. All videos were verified to be covered by Creative Commons licenses, public-domain status, or author ownership permitting release with the benchmark package. Table~\ref{tab:video-licenses} summarizes the resulting license distribution.

\begin{table}[h!]
  \centering
  \caption{Distribution of video sources in AgentVidBench.}
  \begin{tabular}{lc}
    \toprule
    Source platform   & \# Videos \\
    \midrule  
    YouTube                  & 44 \\
    Wikimedia Commons        & 10 \\
    NASA                     & 8 \\
    Self Recorded            & 4 \\
    DVIDS (US DoD)           & 2 \\
    Current Biology          & 1 \\
    Vimeo                    & 1 \\
    US National Park Service  & 1 \\
    \midrule
    \textbf{Total}         & \textbf{71} \\
    \bottomrule
  \end{tabular}
  \label{tab:video-sources}
\end{table}

\begin{table}[h!]
  \centering
  \caption{Distribution of video licenses in AgentVidBench.}
  \begin{tabular}{lc}
    \toprule
    License   & \# Videos \\
    \midrule  
    YouTube CC BY (reuse allowed)     & 43 \\
    Public Domain                     & 14 \\
    CC BY-NC 4.0                      & 4 \\
    CC BY-SA 4.0                      & 4 \\
    CC BY 4.0                         & 3 \\
    CC BY-SA 3.0                      & 2 \\
    CC BY 3.0                         & 1 \\
    \midrule
    \textbf{Total}         & \textbf{71} \\
    \bottomrule
  \end{tabular}
  \label{tab:video-licenses}
\end{table}

\subsection{Annotation Process and Agreement Statistics}
\label{app:annotation-process}
All benchmark questions were constructed by the five authors, with each question created by a single author. The author prepared the question, its 26 answer options, the answer key, and the explanation. Two authors contributed exclusively to question construction and did not participate in the blind review; the remaining three authors served as reviewers. For each question, the author's answer key served as one answer, and two reviewers who did not author the question independently solved it without access to that key, so that each question was answered by three annotators. The three answers were then compared, and agreement on the 26-way answer choice was high, with a Fleiss’ $\kappa$ of $0.826$. The blind review flagged 22 of the 100 questions for reconciliation, primarily because a reviewer selected a different option or identified ambiguity in the wording. Subsequent review led to seven answer-key corrections, the replacement of one question, and wording clarifications for 23 questions, without changing their answers.

\subsection{Milestone Extraction Details}
\label{app:evidence-milestones}

Final-answer accuracy alone cannot determine whether an agent actually acquired the video evidence needed to justify its answer. To evaluate the agent's investigation process, annotators manually wrote a detailed ground-truth reasoning note for each question, describing how a human would investigate the video to reach the answer. We then abstracted these notes into structured evidence milestones, where each milestone specifies a tool-agnostic evidence or reasoning step needed to support the answer.

Milestones are intentionally more abstract than raw solution trajectories. Rather than prescribing a particular tool sequence, timestamp, frame index, or navigation path, they define the underlying evidence requirements of a question. This allows agents with different interfaces or search strategies to be judged against the same evidence reference.

\paragraph{Milestone Schema.}
Each milestone consists of a stable identifier, an evidence type, and a tool-agnostic natural-language description. The identifier is used to align milestone-level coverage judgments, the type indicates the kind of evidence or reasoning step required, and the description states what evidence should be acquired.

\paragraph{Tool-agnostic Annotation Principles.}
Milestones are written to specify \emph{what} evidence is needed, not \emph{how} the evidence should be obtained. This is necessary because agentic video frameworks vary in their tool interfaces and interaction granularity: some use low-level controls such as seeking and frame-rate adjustment, others use high-level video-analysis calls, and zero-shot baselines may receive the video in a single pass. Comparing raw tool traces would therefore privilege one interface or search strategy over another. 
To avoid encoding path-specific cues, milestone descriptions remove or abstract away narrow timestamps, exact time ranges, tool names, occurrence timestamp lists, and solution-specific decomposition details from the original human reasoning note. For example, a human explanation may mention that a clue appears at a particular timestamp, but the milestone should instead describe the evidence to be found at that moment. Similarly, for counting questions, milestones should not enumerate every timestamped occurrence; they should describe the counting target and any relevant exclusion condition. Table~\ref{tab:milestone-examples} gives representative examples of path-specific cues and their tool-agnostic milestone counterparts.

\begin{table}[h!]
    \centering
    \caption{Examples of path-specific cues and their tool-agnostic milestone counterparts.}
    \label{tab:milestone-examples}
    \small
    \setlength{\tabcolsep}{4.5pt}
    \renewcommand{\arraystretch}{1.15}
    \begin{tabularx}{\linewidth}{
        >{\raggedright\arraybackslash}p{0.17\linewidth}
        >{\raggedright\arraybackslash}X
        >{\raggedright\arraybackslash}X
    }
    \toprule
    Path-specific cue & Path-specific reference & Tool-agnostic milestone \\
    \midrule
    Tool name &
    Use the OCR tool to read the model code. &
    Read the model code on the bezel as \texttt{CF-19}. \\

    Specific timestamp &
    At 02:13, inspect the laptop screen. &
    Locate the close-up moment where the laptop screen and bezel text are clearly visible. \\

    Fixed time range &
    The relevant evidence is in the 00:20--00:50 segment. &
    Locate the driving segment where the relevant event is visible. \\

    Timestamp list &
    Check the moments at 2:49, 2:50, 3:36, and 3:51. &
    Identify all candidate moments showing the target action. \\

    Exact arithmetic chain &
    Sum the counts as \texttt{4+2+1+7+10+2+5+3+3}. &
    Sum the extracted counts to obtain the final total. \\
    \bottomrule
    \end{tabularx}
\end{table}

\paragraph{Milestone Statistics.}
Rather than prescribing full solution trajectories, the milestone representation provides a compact evidence checklist. Each question is represented by 3--7 milestones, with most questions containing four or five evidence units. Table~\ref{tab:milestone-counts} summarizes the number of milestones per question.
Table~\ref{tab:milestone-types} shows that temporal localization is the most frequent milestone type, followed by visual recognition, exclusion checking, counting, context identification, and OCR/text reading. This distribution reflects the benchmark's emphasis on locating relevant evidence over time and combining it with perceptual, textual, counting, or exclusion-based reasoning steps.

\begin{table}[h!]
    \centering
    \caption{Number of evidence milestones per question in AgentVidBench.}
    \label{tab:milestone-counts}
    \small
    \setlength{\tabcolsep}{8pt}
    \renewcommand{\arraystretch}{1.12}
    \begin{tabular}{lcc}
    \toprule
    \# Milestones & \# Questions & Ratio (\%) \\
    \midrule
    3 & 14 & 14.0 \\
    4 & 32 & 32.0 \\
    5 & 40 & 40.0 \\
    6 & 12 & 12.0 \\
    7 & 2 & 2.0 \\
    \bottomrule
    \end{tabular}
\end{table}

\begin{table}[h!]
    \centering
    \caption{Distribution of evidence milestone types. Percentages are computed over 456 milestones.}
    \label{tab:milestone-types}
    \small
    \setlength{\tabcolsep}{8pt}
    \renewcommand{\arraystretch}{1.12}
    \begin{tabular}{lcc}
    \toprule
    Milestone type & Occurrences & Ratio (\%) \\
    \midrule
    Temporal localization & 113 & 24.8 \\
    Visual recognition & 69 & 15.1 \\
    Exclusion check & 59 & 12.9 \\
    Counting & 54 & 11.8 \\
    Context identification & 43 & 9.4 \\
    OCR/text reading & 41 & 9.0 \\
    Verification & 29 & 6.4 \\
    Audio-visual evidence & 28 & 6.1 \\
    Comparison & 11 & 2.4 \\
    Arithmetic & 9 & 2.0 \\
    \bottomrule
    \end{tabular}
\end{table}

\paragraph{Reference Validation.}
We use milestone-only references as the default input to the trajectory judge. To validate this choice, we compare judge scores obtained with milestone-only references against scores obtained with the original human answer explanations. As shown in Table~\ref{tab:milestone-human-correlation}, the two references produce strongly correlated process-quality scores across the five axes, suggesting that milestones preserve the core evidence information needed for trajectory evaluation.
At the same time, milestone references substantially reduce path-specific constraints. Human explanations often contain timestamps, time ranges, timestamp lists, or solution-specific arithmetic decompositions. Directly using them as judge references can reward trajectories that follow the same human solution path and penalize different but valid search strategies. Table~\ref{tab:milestone-cues} shows that milestone references preserve the judging signal while removing most path-specific cues, making them a more suitable default for evaluating agents that may reach the same evidence through different valid paths.

\begin{table}[h!]
    \centering
    \caption{Correlation between human-reference and milestone-reference judge scores.}
    \label{tab:milestone-human-correlation}
    \small
    \setlength{\tabcolsep}{8pt}
    \renewcommand{\arraystretch}{1.12}
    \begin{tabular}{lc}
    \toprule
    Process-quality axis & Pearson $r$ \\
    \midrule
    Task understanding & 0.656 \\
    Ground-truth evidence coverage & 0.756 \\
    Evidence grounding & 0.814 \\
    Evidence completeness & 0.746 \\
    Reasoning faithfulness & 0.837 \\
    \midrule
    Mean & 0.762 \\
    \bottomrule
    \end{tabular}
\end{table}

\begin{table}[h!]
    \centering
    \caption{Path-specific cues detected in human explanations and milestone references.}
    \label{tab:milestone-cues}
    \small
    \setlength{\tabcolsep}{8pt}
    \renewcommand{\arraystretch}{1.12}
    \begin{tabular}{lcc}
    \toprule
    Reference input & Affected Qs & Tool-agnostic rate \\
    \midrule
    Human answer explanation & 85/100 & 15.0\% \\
    Evidence milestones & 1/100 & 99.0\% \\
    \bottomrule
    \end{tabular}
\end{table}

\subsection{Taxonomy Labels} 
\label{app:taxonomy}

We annotate each video and question with taxonomy metadata to support fine-grained analysis of benchmark composition and model performance. Taxonomy labels with deterministic criteria, such as \emph{duration}, are assigned based on predefined rules, whereas \emph{production style} and \emph{audio type} are determined via manual review. For \emph{genre} and \emph{required skills}, we utilize a human-in-the-loop approach where MLLM-generated initial labels are rigorously verified by human annotators. Finally, \emph{difficulty} is initially estimated by humans and then empirically calibrated using the observed performance of the evaluated MLLMs and agentic frameworks to mitigate subjective bias.

Table~\ref{tab:taxonomy-overview} summarizes the taxonomy axes, whether they are assigned at the video or question level, and their label cardinality. Video-level metadata describes the content genre, duration, production style, and audio characteristics of each source video, while question-level metadata describes the skills required to answer the question and its difficulty.

Video genre and required skills are organized hierarchically. Table~\ref{tab:video-genre-taxonomy} lists the four video-genre super-categories and their corresponding fine-grained genres. Production style, audio type, and required skills are multi-label axes, since a video or question may involve multiple styles, modalities, or skills. Their label sets are provided in Tables~\ref{tab:production-audio-taxonomy} and~\ref{tab:skill-taxonomy}.

\begin{table}[h!]
    \centering
    \caption{Overview of taxonomy axes in \textsc{AgentVidBench}.}
    \label{tab:taxonomy-overview}
    \small
    \setlength{\tabcolsep}{6pt}
    \renewcommand{\arraystretch}{1.12}
    \begin{tabular}{llcl}
    \toprule
    Level & Axis & \# Labels & Description \\
    \midrule
    Video & Genre & 12 & Content genre, grouped into 4 super-categories. \\
          & Duration & 3 & Short, medium, or long duration bin. \\
          & Production style & 14 & Camera, editing, and presentation style. \\
          & Audio type & 5 & Dominant audio characteristics. \\
    \midrule
    Question & Required skills & 20 & Multi-label skills, grouped into 6 capabilities. \\
             & Difficulty & 4 & Question difficulty label. \\
    \bottomrule
    \end{tabular}
\end{table}

\begin{table}[h!]
    \centering
    \caption{Video genre taxonomy.}
    \label{tab:video-genre-taxonomy}
    \small
    \setlength{\tabcolsep}{6pt}
    \renewcommand{\arraystretch}{1.2}
    \begin{tabular}{p{0.25\linewidth}>{\raggedright\arraybackslash}p{0.68\linewidth}}
    \toprule
    Super-category & Description \\
    \midrule
    Daily Lifestyle &
    Everyday activity-oriented videos. \newline
    \emph{Genres:} Travel/Lifestyle Vlog, Cooking/Food, Fitness/Workout. \\

    Knowledge \& Documentary &
    Informational, educational, or documentary-style videos. \newline
    \emph{Genres:} Science/Nature Documentary, News/Interview, Tutorial/How-to. \\

    Performance &
    Event- or performance-centered videos. \newline
    \emph{Genres:} Sports Competition, Live Performance, Music Clip. \\

    Media Arts &
    Screen-based, artistic, or contextual visual content. \newline
    \emph{Genres:} Screen Recording, Art/Animation, Stock/B-roll. \\
    \bottomrule
    \end{tabular}
\end{table}

\begin{table}[h!]
    \centering
    \caption{Production-style and audio-type labels. Labels are multi-label at the video level.}
    \label{tab:production-audio-taxonomy}
    \small
    \setlength{\tabcolsep}{6pt}
    \renewcommand{\arraystretch}{1.25}
    \begin{tabular}{p{0.20\linewidth}p{0.72\linewidth}}
    \toprule
    Taxonomy & Labels \\
    \midrule
    Production style &
    scene-cut edit, static camera, handheld camera, single continuous shot, drone/aerial, tracking camera, animation, screen capture, subtitle present, split screen, timelapse, text-overlay heavy, multicam switch, slow motion. \\
    \addlinespace[2pt]
    Audio type &
    speech, music, ambient sound, sound effects, silent. \\
    \bottomrule
    \end{tabular}
\end{table}

\begin{table}[h!]
    \centering
    \caption{Required-skill taxonomy. Labels are multi-label at the question level.}
    \label{tab:skill-taxonomy}
    \small
    \setlength{\tabcolsep}{5pt}
    \renewcommand{\arraystretch}{1.18}
    \begin{tabular}{p{0.22\linewidth}>{\raggedright\arraybackslash}p{0.30\linewidth}>{\raggedright\arraybackslash}p{0.38\linewidth}}
    \toprule
    Capability & Description & Fine-grained skills \\
    \midrule
    Temporal &
    Reasoning about time, sequence, or scene structure of the video. &
    event localization, event ordering, scene segmentation, change detection. \\

    Perception &
    Frame-level visual recognition of entities, attributes, or activities. &
    object recognition, action recognition, face detection / re-identification, attribute/state recognition, spatial relation, spatial search. \\

    Counting &
    Quantifying occurrences across the video, possibly under conditions or with arithmetic over multiple counts. &
    simple counting, conditional counting, counting arithmetic. \\

    Agentic reasoning &
    Higher-order skills: planning multi-step investigations, filtering candidate answers, or aggregating evidence across observations. &
    multi-step planning, candidate filtering, evidence aggregation. \\

    OCR/Text &
    Reading textual or numeric content rendered visually in the frame. &
    OCR/text reading, numeric value extraction. \\

    Audio-visual &
    Tasks that require integrating audio cues with visual content. &
    speech grounding, audio event detection. \\
    \bottomrule
    \end{tabular}
\end{table}
\section{Appendix -- Evaluation Pipeline Details}
\label{app:evaluation-pipeline}

As illustrated in Figure~\ref{fig:dataset-overview}(b), AgentVidBench evaluates each system output along two complementary branches. The first branch evaluates the final answer by matching the selected option against the ground-truth answer. This provides the standard multiple-choice accuracy metric. The second branch evaluates the reasoning trajectory produced before the final answer, using an LLM judge to assess whether the system acquired, grounded, and integrated the video evidence needed to justify its prediction.

The motivation for the second branch is that final-answer accuracy alone does not reveal how the answer was obtained. Two systems may select the same option while following very different processes: one may localize the relevant moments, verify distractors, and derive the answer from grounded observations, while another may rely on incomplete evidence or commit prematurely. We therefore evaluate trajectory-level process quality in addition to answer correctness.

For a given question, the trajectory judge receives the question text, answer options, a pre-classified sweep-applicability flag, pre-computed evidence milestones, and the rendered agent trajectory. The gold answer and externally parsed predicted answer letter are not provided to the judge, preventing answer label leakage. Final-answer correctness is computed separately by exact multiple-choice matching. The original human answer explanation is not provided; instead, the judge uses milestone-only references, as described in Appendix~\ref{app:evidence-milestones}. This design evaluates whether the trajectory recovers the underlying evidence needed for the answer, rather than whether it follows the same timestamp sequence or tool-use path as the human-written solution.

\subsection{Judge Input Construction}
Evaluated systems produce different kinds of intermediate outputs. Agentic frameworks may generate structured tool-use trajectories with calls, observations, and intermediate reasoning, whereas non-agentic baselines may produce only a direct textual response. Before judging, we render each prediction into a common text format containing the predicted final answer and all available intermediate observations, retrieved evidence, tool calls, and reasoning traces. This normalization allows the same judge prompt to be applied across heterogeneous systems without assuming a shared tool interface.

\subsection{Process-quality Axes}
We define the five process-quality axes by analyzing recurring failure modes observed in MLLM and MLLM-agent trajectories. These failures cluster around five stages of the investigation process: understanding the task, acquiring the required evidence, grounding observations in the video, checking whether the evidence search is complete, and faithfully deriving the answer from the acquired evidence. The trajectory judge therefore assigns scores along five axes: Task Understanding (TU), Ground-truth Evidence Coverage (ECov), Evidence Grounding (EG), Evidence Completeness (ECom), and Reasoning Faithfulness (RF). Each applicable axis is scored on a 0--2 scale, where 0 indicates a severe failure, 1 indicates partial success, and 2 indicates strong performance.

\begin{itemize}[leftmargin=1em]
    \item \textbf{Task Understanding (TU).} Measures whether the agent correctly understands the question, including the target entity, constraints, exclusion rules, temporal scope, and answer format. This axis captures failures such as solving for the wrong object, ignoring a qualifier, or misinterpreting ``first'', ``last'', ``only'', or similar constraints.

    \item \textbf{Ground-truth Evidence Coverage (ECov).} Measures whether the trajectory acquires the evidence specified by the reference milestones. This axis focuses on whether the necessary evidence is present in the trajectory, rather than whether the final option happens to be correct.

    \item \textbf{Evidence Grounding (EG).} Measures whether the observations claimed by the agent match the reference evidence. This includes OCR values, object identities, counts, visual attributes, spatial relations, audio cues, and other intermediate observations. An agent may search in the right place but still receive a low EG score if it misreads or misrecognizes the evidence.

    \item \textbf{Evidence Completeness (ECom).} Measures whether the agent sufficiently checks the relevant temporal scope or candidate set when the question requires evidence-completeness verification. This axis is especially important for counting, ordering, first/last, absence, or exhaustive-search questions. If a question is locally answerable from a single anchored observation, ECom is marked as not applicable and excluded from the applicable-axis mean.

    \item \textbf{Reasoning Faithfulness (RF).} Measures whether the final answer follows consistently from the evidence acquired in the trajectory. This axis captures errors such as unsupported leaps, arithmetic mistakes, contradictions between observations and conclusion, or incorrect option mapping.
\end{itemize}

\paragraph{Applicability of Evidence Completeness (ECom).}
Evidence Completeness (ECom) is applied only when a question requires evidence-completeness verification, such as checking the full temporal scope or multiple candidate instances before committing to an answer. Such questions include counting tasks, total or aggregate questions, first/last or Nth-instance questions, ordering/ranking questions, and cases where absence or exclusion must be verified across the video. For locally anchored questions, where a perfect agent could answer from a single well-localized observation, ECom is treated as not applicable. To ensure consistency across frameworks, we pre-determine ECom applicability for each question, include it as dataset metadata, and inject it into the judge input, rather than asking the judge to re-decide applicability independently for each trajectory.

\paragraph{Trajectory Score Aggregation.}
We aggregate the applicable process-axis scores into a normalized trajectory score:
\[
\mathrm{Traj}
=
\frac{1}{2}
\cdot
\frac{1}{|\mathcal{A}|}
\sum_{a \in \mathcal{A}} s_a,
\]
where \(s_a \in \{0,1,2\}\) is the score for axis \(a\), and \(\mathcal{A} \subseteq \{\mathrm{TU}, \mathrm{ECov}, \mathrm{EG}, \mathrm{ECom}, \mathrm{RF}\}\) is the set of applicable axes. When ECom is not applicable, it is excluded from \(\mathcal{A}\). We denote this normalized score as \emph{Traj.} throughout the experimental tables.

\subsection{Auxiliary Diagnostics} 
\label{app:auxiliary-diagnostics}
In addition to the process-axis scores used to compute \emph{Traj.}, the judge outputs auxiliary diagnostics for further analysis. First, it assigns a coverage label to each reference milestone: \texttt{covered}, \texttt{partial}, \texttt{incorrect}, or \texttt{missing}. These labels are not part of the main trajectory score; they are used to inspect whether each milestone is fully supported, partially addressed, contradicted by an incorrect observation, or absent in the rendered trajectory. This helps distinguish failures caused by missing evidence from those caused by incomplete grounding or incorrect observations. For aggregate reporting, we convert these labels into a soft milestone coverage rate by assigning scores of 1.0, 0.5, 0.0, and 0.0 to \texttt{covered}, \texttt{partial}, \texttt{incorrect}, and \texttt{missing}, respectively.

Second, the judge assigns zero or more closed-vocabulary failure tags that summarize common error modes, such as misunderstood conditions, missed evidence, visual misrecognition, OCR errors, counting errors, insufficient evidence-completeness verification, wrong arithmetic, or evidence--answer conflict. The vocabulary was built by inspecting baseline trajectories produced on AgentVidBench and encoding the recurring failure phenomena as a set of 21 tags. Together, milestone coverage labels and failure tags make it possible to analyze not only how well a system performs, but also why its trajectory succeeds or fails.

\subsection{Judge Validation}
We validate the trajectory judge along four lines: separation of correct and incorrect trajectories, stability across judge models and prompts, robustness to superficially plausible trajectories, and agreement with human ratings.

\paragraph{Correctness Separation.}
Because the judge never sees the answer key, answer correctness, which is simply whether the selected option matches the ground-truth option, provides an external signal against which the trajectory score can be checked. We take the 600 trajectories produced by Single-Turn, AVP, and Ours with the Gemini 2.5 Flash and Gemini 2.5 Pro backbones on all 100 questions, split them by whether the final answer was correct, and compare their trajectory scores. To check that the result does not depend on one judge, we score every trajectory with three judge models. Under every judge, correct runs receive substantially higher trajectory scores than incorrect runs. Under Claude Opus 4.7, correct runs receive a mean score of $0.922 \pm 0.128$, compared with $0.456 \pm 0.155$ for incorrect runs, a gap of $0.47$. Under GPT-5, the corresponding scores are $0.856 \pm 0.155$ and $0.501 \pm 0.186$, respectively, and under Gemini 3.1 Pro\,\cite{deepmind2026gemini31pro} they are $0.941 \pm 0.117$ and $0.571 \pm 0.164$, respectively. The area under the ROC curve, that is, the probability that a randomly chosen correct run outscores a randomly chosen incorrect run, is between $0.93$ and $0.98$ across the three judges. The trajectory score thus tracks whether the agent actually reached the right answer, even though the judge was never told which answer is right.

\paragraph{Judge and Prompt Sensitivity.}
A trajectory score is only useful if it does not depend on which judge model or which exact wording of the rubric happens to be used. We therefore test both. First, we take the 600 trajectories of the previous paragraph, each scored by Claude Opus 4.7, GPT-5, and Gemini 3.1 Pro, and compare the scores that different judges assign to the same trajectory. Between any two judges, the mean absolute difference between the scores assigned to the same trajectory is $0.10$ to $0.11$, and the two judges' per-trajectory scores correlate at $r = 0.83$ to $0.86$. This judge-to-judge difference is only about one fifth of the $0.47$ gap between correct and incorrect runs under Claude Opus 4.7 reported in the preceding paragraph, so the systems' rankings remain nearly identical regardless of the judge. Second, we test the wording of the rubric by paraphrasing it. We had Claude Opus 4.8\,\cite{anthropic2026claudeopus48} paraphrase the explanatory sentences of the judge prompt while leaving the five-axis structure, scoring criteria, examples, applicability logic, failure-tag vocabulary, and output schema unchanged. Using the same judge (Claude Opus 4.7) as in our main experiments, we then score three sets of trajectories under the original and the paraphrased rubric, namely, Single-Turn with Gemini 2.5 Flash, Single-Turn with Gemini 2.5 Pro, and Ours with Gemini 2.5 Flash, about 100 questions each. Per-question scores under the two rubrics correlate at $r = 0.96$ to $0.98$, and the mean absolute change is at most $0.036$. The trajectory score is thus stable under both the choice of judge model and the phrasing of the rubric.

\paragraph{Adversarial Trajectory Tests.}
A judge that rewarded fluent and confident writing rather than actual evidence acquisition would be easy to game. We test this with synthetic trajectories, which are scored with the same judge and the same milestones as genuine trajectories, without any indication that they are synthetic. The first test asks whether stating the right answer is sufficient. For 50 difficulty-stratified questions, we write a trajectory that confidently asserts the correct answer with generic but plausible reasoning and contains no tool calls or observations, and compare it with our agent's genuine trajectory for the same question. The synthetic trajectories receive a mean score of $0.096$, compared with $0.758$ for the genuine trajectories, and the two sets do not overlap (AUC $1.000$). The second test asks whether the judge notices when a specific piece of evidence is missing. Starting from a genuine trajectory produced by our agent, we remove everything that supports one milestone, including the tool calls that acquired it, the resulting observations, and every later mention of it, while leaving the rest of the trajectory unchanged. The score drops by $0.41$, and in $73\%$ of cases, the coverage label of exactly the removed milestone changes from \texttt{covered} to a lower label. The judge therefore responds to milestone-specific evidence rather than narrative fluency. One limitation remains by construction: a trajectory written after the fact by someone who already knows the answer, and who therefore reproduces every required piece of evidence, cannot be reliably distinguished from a genuine trajectory by trajectory-only evaluation. This case lies outside our protocol, in which agents have access to neither the answer key nor the milestones, and trajectories are produced step by step during the investigation itself, so that every observation comes from an actual tool call.

\paragraph{Human--Judge Alignment.}
Finally, correlation with answer correctness does not by itself show that the judge scores each axis in a human-aligned manner, so we compare the judge directly with human ratings. We select 20 questions stratified by difficulty and collect the trajectories generated by all five methods for each question (100 trajectories in total). Two authors then rate these trajectories on the five axes. The raters are blind to the system identity, follow the same rubric as the judge, and see only the information available to the judge, which excludes the gold answer. Human scores correlate strongly with the judge's overall trajectory scores, with Pearson $r = 0.77$. Human--judge correlation is positive on every axis: task understanding $r = 0.39$, evidence coverage $r = 0.71$, evidence grounding $r = 0.75$, evidence completeness $r = 0.60$ on its applicable subset, and reasoning faithfulness $r = 0.58$. The relatively lower correlation for task understanding should be interpreted in light of a ceiling effect: scores on this axis are at the maximum for most trajectories under both human and judge ratings, leaving little variance and making the correlation sensitive to the few non-maximal cases. The overall correlation is higher than the per-axis correlations because averaging across axes reduces rating noise relative to comparisons on individual axes. At the method level, human and judge scores yield closely aligned rankings and identify the same top-ranked method.

\subsection{Model Settings and Prompt Templates}
\label{app:prompt-settings}
To support reproducibility, we provide the model configurations and prompt templates used for milestone extraction and trajectory judging. The final pipeline uses GPT-5 for extracting milestones and Claude Opus 4.7 for judging trajectories. All trajectory judge calls use milestone-only references.

\begin{table}[h!]
    \centering
    \caption{Model settings used in milestone extraction and trajectory judging.}
    \label{tab:llm-settings}
    \small
    \setlength{\tabcolsep}{6pt}
    \renewcommand{\arraystretch}{1.12}
    \begin{tabular}{p{0.17\linewidth}p{0.15\linewidth}p{0.57\linewidth}}
    \toprule
    Component & Model & Settings \\
    \midrule
    Milestone extraction &
    GPT-5 &
    Reasoning effort: medium; maximum completion tokens: 16,384. \\

    Trajectory judge &
    Claude Opus 4.7 &
    1M context enabled; maximum output tokens: 4,096. \\
    \bottomrule
    \end{tabular}
\end{table}

\paragraph{Milestone Extraction Prompt.}
The milestone extraction prompt converts a human-written ground-truth reasoning note into a structured, tool-agnostic list of evidence milestones. 
\begin{tcolorbox}[
        title=Milestone extraction system prompt,
        colback=gray!4,
        colframe=gray!40,
        coltitle=black,
        colbacktitle=gray!12,
        fonttitle=\bfseries,
        boxrule=0.4pt,
        arc=2pt,
        breakable
    ]
    \begin{quote}
    \small
    \textbf{System prompt.}\\
    You are abstracting a video QA question's human-written solution path into tool-agnostic evidence milestones.

    Different agents use different tool interfaces, such as low-level video controls, high-level video-analysis calls, or zero-shot video input. Comparing raw tool sequences would be unfair. Extract instead the underlying evidence the agent must acquire to reach the answer: what to find, not how to find it.

    \textbf{Inputs.}\\
    You will receive the question text, answer options, gold answer, gold answer text, and the human-written answer explanation.

    \textbf{Output.}\\
    Return a JSON object with a \texttt{milestones} array. Each milestone contains:
    \begin{itemize}[leftmargin=1em]
        \item \texttt{id}: \texttt{M1}, \texttt{M2}, ... starting from 1.
        \item \texttt{type}: one of \texttt{context\_identification}, \texttt{temporal\_localization}, \texttt{ocr\_text\_reading}, \texttt{audio\_visual\_evidence}, \texttt{visual\_recognition}, \texttt{counting}, \texttt{comparison}, \texttt{exclusion\_check}, \texttt{arithmetic}, or \texttt{verification}.
        \item \texttt{description}: a one-line English description of the evidence to acquire.
    \end{itemize}

    \textbf{Core principles.}\\
    Milestones must be tool-agnostic. Do not mention tool names such as OCR tools or video-analysis calls. Do not include meta-references such as ``in the ground-truth solution.'' Do not bake specific timestamps, narrow time ranges, timestamp lists, or solution-path-specific arithmetic decompositions into the description. Use descriptions such as ``Locate the moment when the target object is visible'' rather than ``Inspect the video at 02:13.''

    Do not include a final option-mapping milestone. Stop at the value derivation or evidence requirement; mapping to an option letter is not treated as evidence.

    For counting questions, do not create one milestone per occurrence. Group the target as a single counting milestone and use a separate exclusion-check milestone when the question requires excluding distractors or duplicates.

    For OCR or identification questions, include the value to be read when that value is necessary for the answer. For arithmetic or comparison questions, include the required operation at the evidence level without prescribing a particular human decomposition unless the decomposition itself is required by the question.

    Return JSON only, with no markdown fence and no preamble.
    \end{quote}
\end{tcolorbox}

\paragraph{Trajectory Judge Prompt.}
The trajectory judge receives a milestone-only reference in the default setting. The original human answer explanation is not included in the judge input, so that the judge evaluates whether the trajectory recovers the underlying evidence rather than whether it follows the same human solution path.

\begin{tcolorbox}[
        title=Trajectory judge system prompt,
        colback=gray!4,
        colframe=gray!40,
        coltitle=black,
        colbacktitle=gray!12,
        fonttitle=\bfseries,
        boxrule=0.4pt,
        arc=2pt,
        breakable
    ]
    \begin{quote}
    \small
    \textbf{System prompt.}\\
    You are a strict evaluator of video-investigation trajectories on AgentVidBench.

    This call evaluates the five process-quality axes, P1--P5, together with milestone coverage labels and failure tags. Score each applicable axis on a 0--2 scale, where 0 is worst and 2 is best. P4 may be \texttt{null} when the question does not require a full-scope evidence sweep.

    Follow the tool-agnostic principle: do not penalize differences in tool names, tool interfaces, or search paths. The agent may have arrived at the same evidence via a different path.

    Use \texttt{GT\_MILESTONES} as the sole reference for evidence. Do not use anything else as a reference.

    \textbf{Process axes.}
    \begin{itemize}[leftmargin=1em]
        \item \textbf{P1 Task Understanding}: did the agent correctly understand WHAT the question asks --- target entity, constraints, exclusions, temporal scope, and option-letter range?
        \begin{itemize}[leftmargin=1em]
            \item \textbf{2}: agent's framing matches the question exactly; all constraints (exclusions, qualifiers, ``first/last'', option range A--Z) acknowledged either explicitly or via consistent behavior.
            \item \textbf{1}: core task understood but a constraint or qualifier missed (e.g., misses an exclusion, treats ``first'' as ``any'').
            \item \textbf{0}: fundamental misunderstanding --- wrong target entity, wrong question type, or ignores a critical constraint that determines the answer.
        \end{itemize}

        \item \textbf{P2 Ground-truth Evidence Coverage}: how many of the \texttt{GT\_MILESTONES} did the agent's trajectory ACQUIRE? Use the milestone\_coverage classifications as the basis. All milestones treated uniformly (no required/convenient distinction).
        \begin{itemize}[leftmargin=1em]
            \item \textbf{2}: all milestones covered (or covered via tool-agnostic alternate paths).
            \item \textbf{1}: some milestones missing or partially covered, but enough acquired that the answer was reachable.
            \item \textbf{0}: most milestones missing or incorrect; the agent's answer is essentially unsupported by acquired evidence.
        \end{itemize}

        \item \textbf{P3 Evidence Grounding}: are the values the agent CLAIMS (OCR readings, counts, observed objects, audio quotes, arithmetic intermediate values) consistent with what the milestones say?
        \begin{itemize}[leftmargin=1em]
            \item \textbf{2}: every claimed value matches the milestone evidence; no fabrication, no misreading.
            \item \textbf{1}: most values correct; at least one mismatch (wrong OCR digit, off-by-N count, mis-identified object) but the mismatch is non-fatal.
            \item \textbf{0}: multiple claimed values wrong, OR the value that determines the final answer is wrong/fabricated.
        \end{itemize}

        \item \textbf{P4 Exhaustive Evidence Sweep} (may be \texttt{null}): did the agent verify that the relevant temporal scope was adequately inspected before committing? The user prompt declares \texttt{SWEEP\_REQUIRED} per qid (\texttt{yes}/\texttt{no}/\texttt{unknown}). If \texttt{no} (anchored question, single-segment determination), set this axis to \texttt{null} and skip. If \texttt{yes}:
        \begin{itemize}[leftmargin=1em]
            \item \textbf{2}: agent explicitly extends inspection past the first match AND confirms no additional / contradicting / qualifying evidence in remaining temporal scope, citing concrete inspected scope.
            \item \textbf{1}: extends past first match but partial --- scans later portion without explicit FULL-scope confirmation, OR mentions later events as a possibility without inspecting them.
            \item \textbf{0}: commits at first match without any check. Verbal confidence (``clearly'', ``I'm sure'') does NOT count as a sweep.
        \end{itemize}

        \item \textbf{P5 Reasoning Faithfulness}: does each reasoning step follow from the evidence the agent claimed? Is the arithmetic correct? Are there unsupported leaps from evidence to the final answer?
        \begin{itemize}[leftmargin=1em]
            \item \textbf{2}: each step traceable to acquired evidence; arithmetic correct; no unsupported leaps; final answer follows from the chain.
            \item \textbf{1}: minor leap or arithmetic slip but core flow defensible; final answer still consistent with the trajectory's evidence.
            \item \textbf{0}: reasoning contradicts own evidence (claims X then concludes $\neg$X), OR commits a fatal arithmetic / logic error, OR final answer is disconnected from the evidence acquired.
        \end{itemize}
    \end{itemize}

    \textbf{Milestone coverage.}\\
    For each reference milestone (fixed IDs M1..Mn given in the user prompt), assign one of four labels: \texttt{covered}, \texttt{partial}, \texttt{incorrect}, or \texttt{missing}. The output must list the same milestones in the same order. These labels feed an auxiliary mc\_rate aggregate (\texttt{covered}=1, \texttt{partial}=0.5, \texttt{incorrect}=0, \texttt{missing}=0) reported alongside the P-axis scores.

    \textbf{Failure tags.}\\
    Select zero to eight tags from the closed vocabulary:
    \begin{center}
    \footnotesize
    \setlength{\tabcolsep}{8pt}
    \renewcommand{\arraystretch}{1.05}
    \begin{tabular}{@{}ll@{}}
    \texttt{misunderstood\_condition} & \texttt{wrong\_target\_entity} \\
    \texttt{visual\_misrecognition} & \texttt{ocr\_error} \\
    \texttt{audio\_transcript\_error} & \texttt{unsupported\_observation} \\
    \texttt{counting\_error} & \texttt{duplicate\_counting} \\
    \texttt{excluded\_item\_counted} & \texttt{wrong\_arithmetic} \\
    \texttt{wrong\_option\_mapping} & \texttt{wrong\_temporal\_segment} \\
    \texttt{missed\_required\_segment} & \texttt{insufficient\_full\_video\_scan} \\
    \texttt{spatial\_relation\_error} & \texttt{premature\_fixation} \\
    \texttt{failure\_to\_expand\_search} & \texttt{failure\_to\_adjust\_granularity} \\
    \texttt{evidence\_answer\_conflict} & \texttt{premature\_answer} \\
    \texttt{overconfident\_uncertainty} & \\
    \end{tabular}
    \end{center}

    The predicted answer letter is computed externally; this judge does NOT score answer correctness.

    \textbf{Output.} Return strict JSON only.
    \end{quote}
\end{tcolorbox}

\paragraph{Judge User Template.}
For each question--prediction pair, we instantiate the following user prompt.

\begin{tcblisting}{
    title=Trajectory judge user template,
    colback=gray!2,
    colframe=gray!25,
    colbacktitle=gray!8,
    coltitle=black,
    fonttitle=\bfseries\small,
    boxrule=0.35pt,
    arc=1.5pt,
    left=5pt,
    right=5pt,
    top=5pt,
    bottom=5pt,
    before skip=6pt,
    after skip=8pt,
    width=0.90\linewidth,
    center,
    listing only,
    breakable,
    listing options={
        basicstyle=\ttfamily\footnotesize,
        breaklines=true,
        columns=fullflexible,
        keepspaces=true
    }
}
QUESTION_ID: {qid}
FRAMEWORK: {framework}
SWEEP_REQUIRED: {sweep_required}

QUESTION:
{question_text}

OPTIONS:
{options}

GT_MILESTONES (curated, fixed IDs M1..Mn --- classify coverage status for EACH, in order):
{milestones_block}

PREDICTED_TRAJECTORY:
{prediction}

Evaluate this prediction against the rubric. Output the JSON object only.
Reminder: milestone_coverage MUST contain exactly {n_milestones}
entries with IDs {milestone_ids} in this order.
\end{tcblisting}

\paragraph{Judge Output Schema.}
The judge returns a strict JSON object with process scores, short rationales, auxiliary milestone coverage labels, failure tags, and a concise summary.

\begin{tcblisting}{
    title=Trajectory judge output schema,
    colback=gray!2,
    colframe=gray!25,
    colbacktitle=gray!8,
    coltitle=black,
    fonttitle=\bfseries\small,
    boxrule=0.35pt,
    arc=1.5pt,
    left=5pt,
    right=5pt,
    top=5pt,
    bottom=5pt,
    before skip=6pt,
    after skip=8pt,
    width=0.90\linewidth,
    center,
    listing only,
    breakable,
    listing options={
        basicstyle=\ttfamily\footnotesize,
        breaklines=true,
        columns=fullflexible,
        keepspaces=true
    }
}
{
  "process_scores": {
    "task_understanding": 0,
    "gt_evidence_coverage": 0,
    "evidence_grounding": 0,
    "evidence_completeness": null,
    "reasoning_faithfulness": 0
  },
  "axis_rationales": {
    "P1": "...",
    "P2": "...",
    "P3": "...",
    "P4": "...",
    "P5": "..."
  },
  "milestone_coverage": [
    {
      "milestone_id": "M1",
      "status": "covered",
      "evidence_in_prediction": "..."
    }
  ],
  "failure_tags": ["..."],
  "summary": "..."
}
\end{tcblisting}

\section{Appendix -- Our Proposed Agentic Framework}
\label{app:ours}

We provide the technical details of Video-ReAct (Ours). Our approach employs a lightweight, ReAct-style agentic framework. At each reasoning step, the main planner evaluates the current context to identify missing evidence and dynamically invokes one of two external tools. The first, \texttt{analyze\_video(start\_time, end\_time, fps $\in [1,20]$, focus, resolution)}, is a fine-grained video inspection tool consistently powered by Gemini 2.5 Pro across all evaluated backbones to maintain a controlled perceptual baseline. The second, \texttt{get\_transcript(start, end)}, is a text-retrieval tool that accesses pre-extracted audio transcripts generated by Whisper-large-v3\,\cite{DBLP:conf/icml/RadfordKXBMS23}. The iterative execution loop is bounded to a maximum of 20 reasoning iterations and 15 tool calls per query, and the complete system prompt is provided in our source code. A qualitative example of the complete trajectory is shown in Figure~\ref{fig:ours_qualitative}.

\begin{figure}[p]
  \centering
  \vspace{-0.4cm}
  \includegraphics[width=0.97\textwidth]{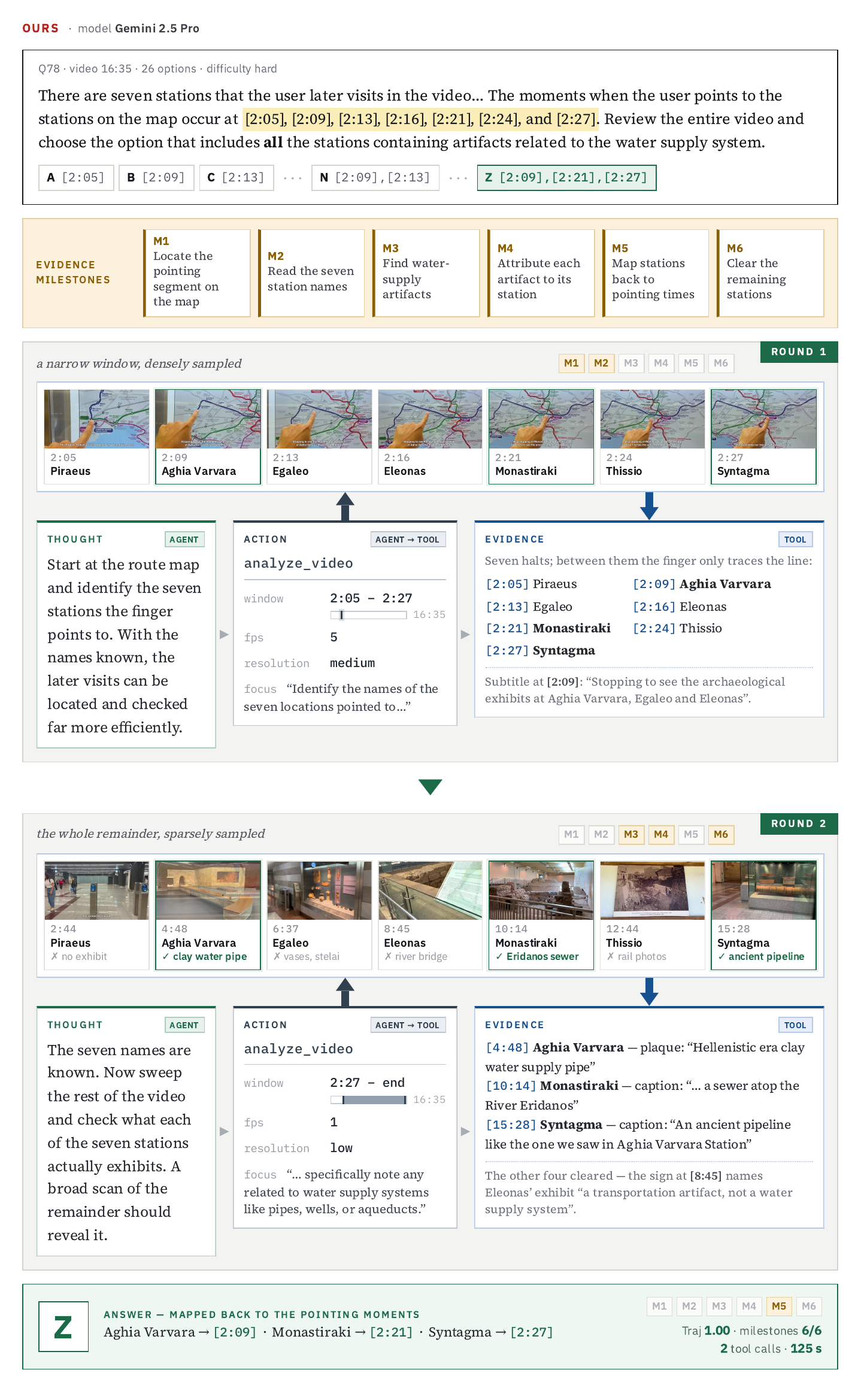}
  \vspace{-0.4cm}
  \caption{\textbf{Qualitative example of Video-ReAct (Ours).} Round~1 spends a dense budget on a narrow window (2:05--2:27, 5\,FPS, medium) to read the seven station names off the route map; Round~2 inverts the setting (2:27--end, 1\,FPS, low) to survey every station visit at once, grounding water-supply artifacts at three stations and clearing the other four.}
  \label{fig:ours_qualitative}
\end{figure}

\clearpage
\newpage

\section{Appendix -- Auxiliary Trajectory Diagnostics}
\label{app:trajectory-diagnostics}

We provide additional trajectory-level diagnostics beyond the
aggregate trajectory score reported in the main text. These diagnostics are
derived from the same milestone-based trajectory judge described in
Appendix~\ref{app:evaluation-pipeline}, but they are used only for analysis and
are not included in the computation of \emph{Traj.}

\subsection{Detailed Process-quality Profiles}
\label{app:detailed-trajectory-profiles}

Figures~\ref{fig:app-radar-proprietary}, \ref{fig:app-radar-qwen}, and
\ref{fig:app-radar-gemma} provide the full trajectory process-quality profiles
for proprietary MLLMs, Qwen-family open-source MLLMs, and Gemma-family
open-source MLLMs, respectively. These figures complement the aggregate
trajectory scores reported in the main text by showing how each method differs
across the five process-quality axes: Task Understanding (TU), Evidence
Coverage (ECov), Evidence Grounding (EG), Evidence Completeness (ECom), and
Reasoning Faithfulness (RF).

\begin{figure}[h]
    \centering
    \includegraphics[
        width=\textwidth,
        height=0.80\textheight,
        keepaspectratio
    ]{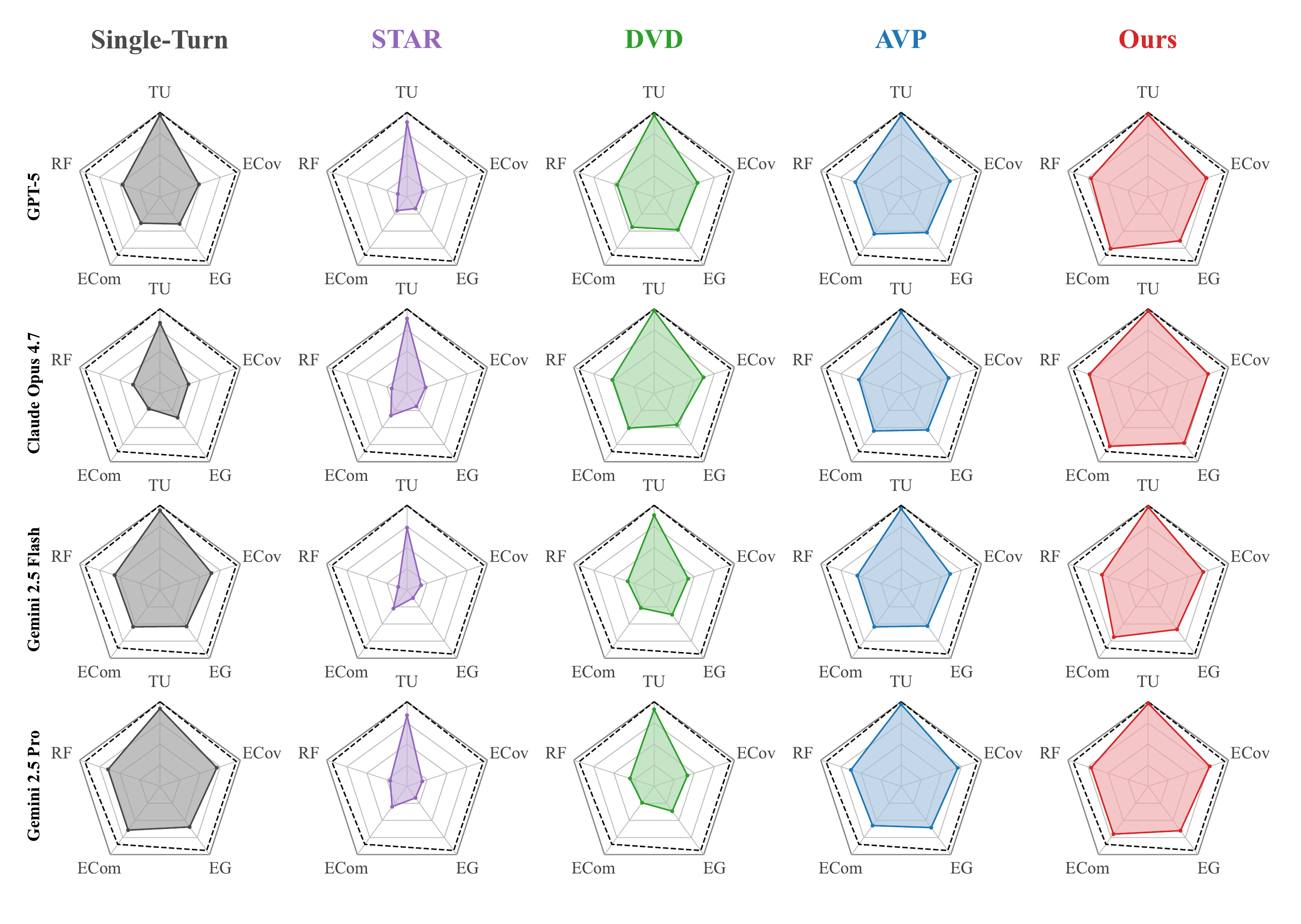}
    \caption{
    Trajectory score profiles of proprietary MLLMs under Single-Turn and agentic frameworks on AgentVidBench.
    Each radar plot shows the average scores across five trajectory evaluation dimensions:
    Task Understanding (TU), Evidence Coverage (ECov), Evidence Grounding (EG),
    Evidence Completeness (ECom), and Reasoning Faithfulness (RF).
    Columns correspond to Single-Turn, STAR, DVD, AVP, and Ours.
    The dashed outline indicates the human reference score, where larger filled regions
    represent stronger trajectory quality.
    }
    \label{fig:app-radar-proprietary}
\end{figure}

\begin{figure}[t]
    \centering
    \includegraphics[
        width=\textwidth
    ]{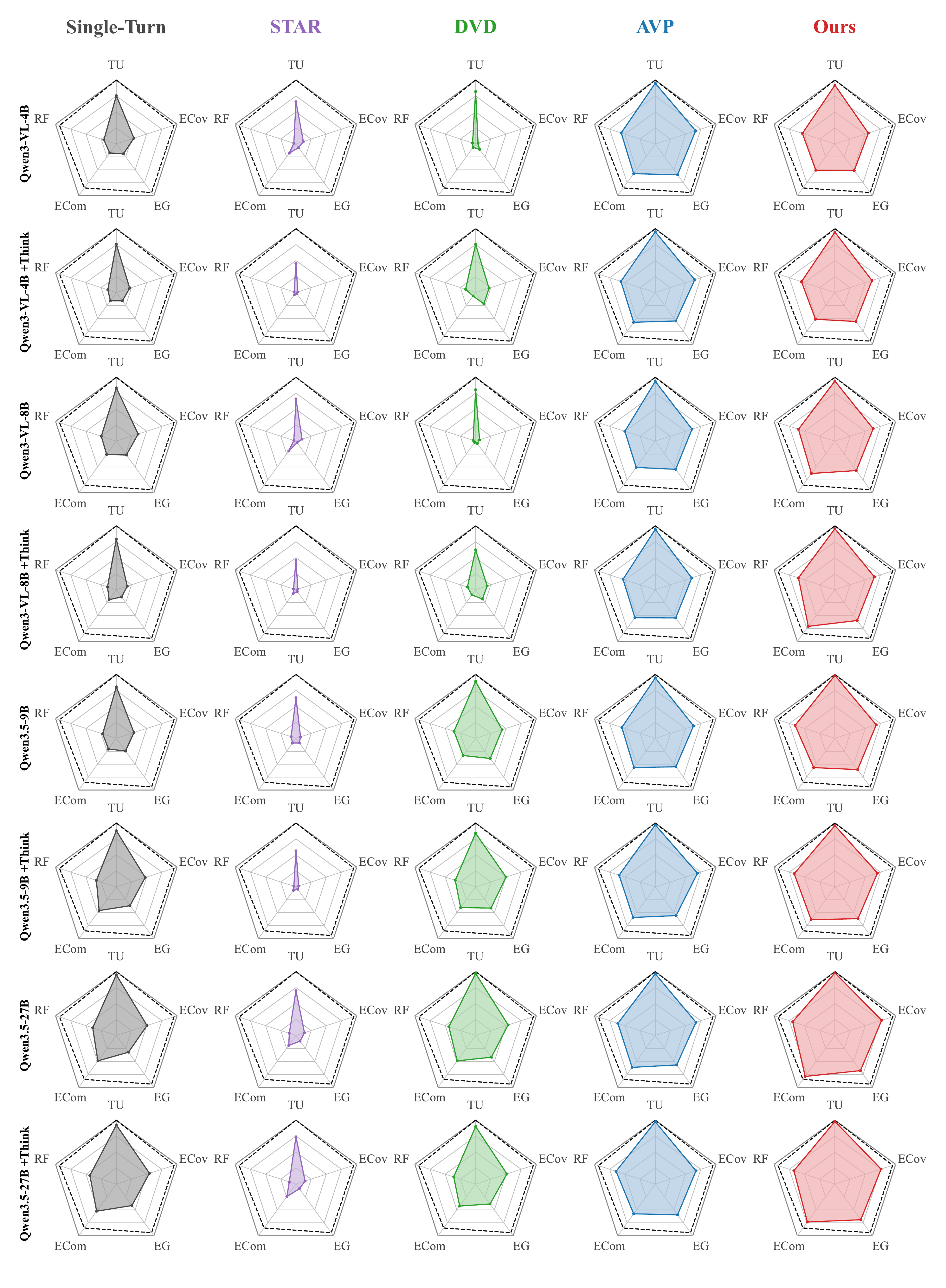}
    \caption{
    Trajectory score profiles of Qwen-family open-source MLLMs under Single-Turn and agentic frameworks on AgentVidBench.
    Each row corresponds to a Qwen backbone or its thinking variant, and each column corresponds
    to Single-Turn, STAR, DVD, AVP, or Ours.
    Each radar plot shows the average scores across five trajectory evaluation dimensions:
    Task Understanding (TU), Evidence Coverage (ECov), Evidence Grounding (EG),
    Evidence Completeness (ECom), and Reasoning Faithfulness (RF).
    The dashed outline indicates the human reference score, where larger filled regions
    represent stronger trajectory quality.
    }
    \label{fig:app-radar-qwen}
\end{figure}

\begin{figure}[t]
    \centering
    \includegraphics[
        width=\textwidth
    ]{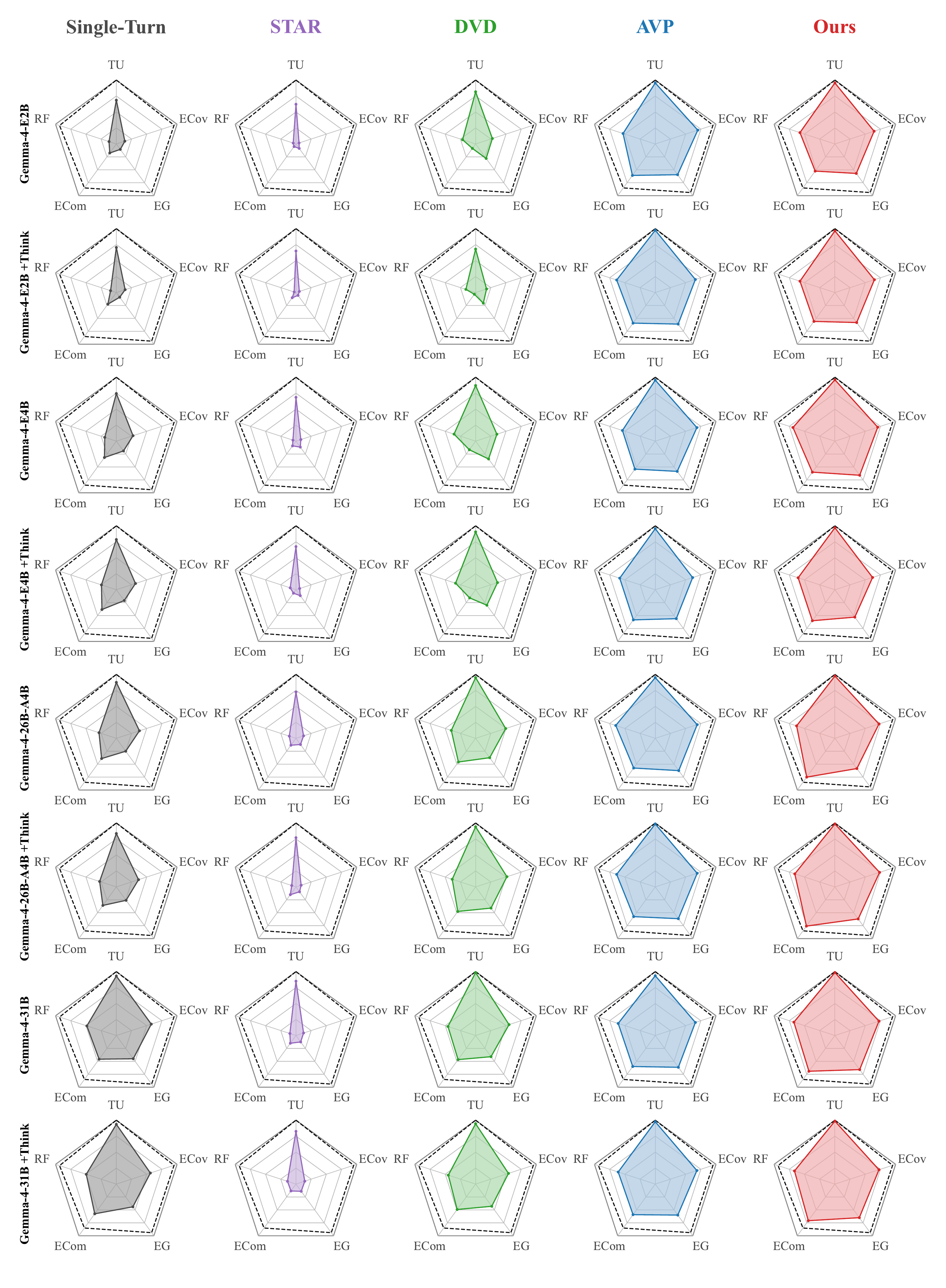}
    \caption{
    Trajectory score profiles of Gemma-family open-source MLLMs under Single-Turn and agentic frameworks on AgentVidBench.
    Each row corresponds to a Gemma backbone or its thinking variant, and each column corresponds
    to Single-Turn, STAR, DVD, AVP, or Ours.
    Each radar plot shows the average scores across five trajectory evaluation dimensions:
    Task Understanding (TU), Evidence Coverage (ECov), Evidence Grounding (EG),
    Evidence Completeness (ECom), and Reasoning Faithfulness (RF).
    The dashed outline indicates the human reference score, where larger filled regions
    represent stronger trajectory quality.
    }
    \label{fig:app-radar-gemma}
\end{figure}

\clearpage
\newpage

\subsection{Milestone Coverage Rate}
\label{app:milestone-coverage-rate}

In addition to the five process-quality axes, the trajectory judge assigns a coverage label to each reference milestone: \texttt{covered}, \texttt{partial}, \texttt{incorrect}, or \texttt{missing}. We report a soft milestone coverage rate that assigns full credit to \texttt{covered} milestones, half credit to \texttt{partial} milestones, and zero credit to \texttt{incorrect} or \texttt{missing} milestones:
\[
    \mathrm{MC\text{-}Rate}
    =
    \frac{
        n_{\mathrm{covered}} + 0.5\,n_{\mathrm{partial}}
    }{
        N_{\mathrm{milestones}}
    }.
\]
Here, \(N_{\mathrm{milestones}}\) denotes the total number of reference milestones. Unlike the main trajectory score, this metric does not average over the five process-quality axes. It provides a direct diagnostic view of how much of the reference evidence each system recovers at the milestone level. Table~\ref{tab:mc_rate} reports the milestone coverage rate for every backbone and method in Table~\ref{tab:main_results}.

\begin{table*}[h]
\centering
\caption{Soft milestone coverage rate on the AgentVidBench dataset, reported for each combination of backbone and method. The metric assigns full credit to \texttt{covered} milestones, half credit to \texttt{partial} milestones, and zero credit to \texttt{incorrect} or \texttt{missing} milestones. Blue arrows indicate absolute changes relative to the corresponding Single-Turn setting.}
\small
\setlength{\tabcolsep}{6pt}
\renewcommand{\arraystretch}{1.12}
\begin{tabular}{l c c c c | c}
\toprule
\textbf{Models} & \textbf{Single-Turn} & \textbf{STAR}~\cite{fan2025toolaugmented} & \textbf{DVD}~\cite{DBLP:conf/nips/ZhangJGLLLL25} & \textbf{AVP}~\cite{wang2025activevideoperceptioniterative} & \textbf{Ours} \\
\midrule
\rowcolor{gray!18}
\multicolumn{6}{@{}l}{\textbf{Proprietary MLLMs}} \\
GPT-5              & 0.58 & 0.29 & 0.58 & 0.66 & \textbf{0.75} \textcolor{blue}{\scriptsize{$\uparrow$0.17}} \\
Claude Opus 4.7    & 0.42 & 0.31 & 0.63 & 0.64 & \textbf{0.78} \textcolor{blue}{\scriptsize{$\uparrow$0.36}} \\
Gemini 2.5 Flash   & 0.68 & 0.25 & 0.44 & 0.66 & \textbf{0.71} \textcolor{blue}{\scriptsize{$\uparrow$0.03}} \\
Gemini 2.5 Pro     & 0.73 & 0.28 & 0.45 & 0.74 & \textbf{0.79} \textcolor{blue}{\scriptsize{$\uparrow$0.06}} \\
\midrule
\rowcolor{gray!18}
\multicolumn{6}{@{}l}{\textbf{Open-source MLLMs}} \\
\rowcolor{gray!7}
\multicolumn{6}{@{}l}{\quad\textit{Qwen family}} \\
Qwen3-VL-4B                  & 0.34 & 0.16 & 0.07 & \textbf{0.70} & 0.60 \textcolor{blue}{\scriptsize{$\uparrow$0.26}} \\
\quad\textit{+Think}         & 0.28 & 0.04 & 0.24 & \textbf{0.70} & 0.63 \textcolor{blue}{\scriptsize{$\uparrow$0.35}} \\
Qwen3-VL-8B                  & 0.42 & 0.16 & 0.10 & 0.64 & \textbf{0.67} \textcolor{blue}{\scriptsize{$\uparrow$0.25}} \\
\quad\textit{+Think}         & 0.27 & 0.05 & 0.24 & 0.67 & \textbf{0.67} \textcolor{blue}{\scriptsize{$\uparrow$0.40}} \\
Qwen3.5-9B                   & 0.36 & 0.13 & 0.50 & 0.67 & \textbf{0.70} \textcolor{blue}{\scriptsize{$\uparrow$0.35}} \\
\quad\textit{+Think}         & 0.56 & 0.12 & 0.51 & 0.70 & \textbf{0.73} \textcolor{blue}{\scriptsize{$\uparrow$0.17}} \\
Qwen3.5-27B                  & 0.56 & 0.21 & 0.59 & 0.70 & \textbf{0.80} \textcolor{blue}{\scriptsize{$\uparrow$0.23}} \\
\quad\textit{+Think}         & 0.62 & 0.24 & 0.54 & 0.71 & \textbf{0.78} \textcolor{blue}{\scriptsize{$\uparrow$0.16}} \\
\midrule
\addlinespace[2pt]
\rowcolor{gray!7}
\multicolumn{6}{@{}l}{\quad\textit{Gemma family}} \\
Gemma-4-E2B                  & 0.21 & 0.11 & 0.32 & \textbf{0.71} & 0.68 \textcolor{blue}{\scriptsize{$\uparrow$0.47}} \\
\quad\textit{+Think}         & 0.21 & 0.10 & 0.22 & \textbf{0.71} & 0.68 \textcolor{blue}{\scriptsize{$\uparrow$0.47}} \\
Gemma-4-E4B                  & 0.34 & 0.15 & 0.42 & 0.70 & \textbf{0.74} \textcolor{blue}{\scriptsize{$\uparrow$0.40}} \\
\quad\textit{+Think}         & 0.38 & 0.14 & 0.40 & \textbf{0.68} & 0.66 \textcolor{blue}{\scriptsize{$\uparrow$0.28}} \\
Gemma-4-26B-A4B              & 0.44 & 0.21 & 0.53 & 0.73 & \textbf{0.76} \textcolor{blue}{\scriptsize{$\uparrow$0.32}} \\
\quad\textit{+Think}         & 0.44 & 0.16 & 0.57 & 0.72 & \textbf{0.77} \textcolor{blue}{\scriptsize{$\uparrow$0.33}} \\
Gemma-4-31B                  & 0.61 & 0.20 & 0.57 & 0.71 & \textbf{0.76} \textcolor{blue}{\scriptsize{$\uparrow$0.15}} \\
\quad\textit{+Think}         & 0.62 & 0.23 & 0.60 & 0.70 & \textbf{0.75} \textcolor{blue}{\scriptsize{$\uparrow$0.13}} \\
\bottomrule
\end{tabular}
\label{tab:mc_rate}
\end{table*}

\subsection{Failure-Mode Analysis}
\label{app:failure-modes}
Alongside the five process-quality axes, the trajectory judge assigns each trajectory zero or more tags from the closed vocabulary of 21 failure tags introduced in Appendix~\ref{app:auxiliary-diagnostics} and listed in full in the judge prompt of Appendix~\ref{app:prompt-settings}. Table~\ref{tab:failure-tags} reports the frequency of every tag per 100 questions for each evaluated setting, averaged over the 20 backbone configurations, and Table~\ref{tab:coverage-by-type} breaks the soft milestone coverage rate of Table~\ref{tab:mc_rate} down by milestone type.

Two failure mechanisms recur across settings. The first is accepting evidence without verification, either by asserting it outright, which is the dominant failure of single-pass models, or by handing it unchecked between agent components. The second is ending the sweep before the required temporal scope has been covered. The most frequent tag of each setting makes the contrast concrete. For Single-Turn models, it is \texttt{unsupported\_observation} at 47.1 occurrences per 100 questions: with no tools, missing evidence is asserted rather than sought. For STAR, the same tag reaches 70.2, the highest of any setting, together with the highest rates of \texttt{premature\_answer}, 42.5 versus 8.4 for Single-Turn, and \texttt{evidence\_answer\_conflict} at 65.0, which is consistent with a fixed schedule that ends the search on plan rather than on evidence. For DVD, it is \texttt{insufficient\_full\_video\_scan} at 43.1, consistent with retrieval that surfaces plausible clips without guaranteeing coverage of the required intervals. For AVP and for our agent, the dominant tag shifts to \texttt{counting\_error} at 31.9 and 29.3 respectively: once evidence seeking replaces assertion, frame-sampled perception becomes the bottleneck, and the perception tags \texttt{counting\_error}, \texttt{visual\_misrecognition}, and \texttt{ocr\_error} remain the largest residual failures even for the best settings. Our agent shows among the lowest rates on most tags and the highest coverage on every milestone type in Table~\ref{tab:coverage-by-type}, yet its coverage stays lowest on \texttt{counting} at 0.45, \texttt{verification} at 0.56, and \texttt{exclusion\_check} at 0.64, indicating concrete headroom on the benchmark.

\begin{table}[h]
\centering
\caption{Failure-tag frequency, in occurrences per 100 questions, by evaluated setting, averaged over the 20 backbone configurations. Tags are ordered by their overall frequency. The lowest frequency in each row is shown in bold.}
\label{tab:failure-tags}
\small
\begin{tabular}{lccccc}
\toprule
Failure tag & Single-Turn & STAR~\cite{fan2025toolaugmented} & DVD~\cite{DBLP:conf/nips/ZhangJGLLLL25} & AVP~\cite{wang2025activevideoperceptioniterative} & Ours \\
\midrule
\texttt{unsupported\_observation} & 47.1 & 70.2 & 34.5 & 15.9 & \textbf{7.7} \\
\texttt{counting\_error} & 37.8 & 38.6 & 37.0 & 31.9 & \textbf{29.3} \\
\texttt{evidence\_answer\_conflict} & 27.3 & 65.0 & 32.7 & \textbf{17.6} & \textbf{17.6} \\
\texttt{insufficient\_full\_video\_scan} & 27.5 & 45.0 & 43.1 & 22.1 & \textbf{14.7} \\
\texttt{visual\_misrecognition} & 37.3 & 34.4 & 23.6 & 23.9 & \textbf{23.0} \\
\texttt{premature\_fixation} & 25.2 & 23.4 & 30.9 & \textbf{8.4} & 15.2 \\
\texttt{ocr\_error} & 20.4 & 27.0 & 16.7 & 12.6 & \textbf{11.3} \\
\texttt{premature\_answer} & 8.4 & 42.5 & 25.1 & \textbf{1.9} & 5.0 \\
\texttt{missed\_required\_segment} & 9.4 & 19.0 & 15.2 & \textbf{5.3} & 5.7 \\
\texttt{wrong\_temporal\_segment} & 14.2 & 14.6 & 10.2 & \textbf{4.3} & 5.6 \\
\texttt{wrong\_option\_mapping} & 6.5 & 15.2 & 8.2 & 9.1 & \textbf{5.3} \\
\texttt{overconfident\_uncertainty} & 13.7 & 8.8 & 6.8 & \textbf{3.0} & 6.8 \\
\texttt{failure\_to\_expand\_search} & 2.2 & 12.7 & 8.2 & \textbf{1.2} & 2.5 \\
\texttt{failure\_to\_adjust\_granularity} & 4.6 & 5.5 & 4.5 & 6.6 & \textbf{4.0} \\
\texttt{duplicate\_counting} & 5.2 & 4.5 & \textbf{1.9} & 5.0 & 7.7 \\
\texttt{wrong\_target\_entity} & 5.1 & 5.0 & 2.4 & 3.1 & \textbf{1.9} \\
\texttt{wrong\_arithmetic} & 3.0 & 6.7 & 3.3 & 2.4 & \textbf{2.1} \\
\texttt{misunderstood\_condition} & 4.5 & 2.9 & \textbf{1.7} & 4.2 & 3.2 \\
\texttt{excluded\_item\_counted} & 1.8 & 1.9 & \textbf{1.4} & 2.4 & 3.0 \\
\texttt{audio\_transcript\_error} & 3.4 & 2.6 & 1.1 & \textbf{0.4} & 0.7 \\
\texttt{spatial\_relation\_error} & 0.6 & 0.4 & \textbf{0.1} & 0.7 & 0.3 \\
\bottomrule
\end{tabular}
\end{table}

\begin{table}[h]
\centering
\caption{Soft milestone coverage rate by milestone type and evaluated setting, averaged over the 20 backbone configurations. The highest coverage in each row is shown in bold.}
\label{tab:coverage-by-type}
\small
\begin{tabular}{lccccc}
\toprule
Milestone type & Single-Turn & STAR~\cite{fan2025toolaugmented} & DVD~\cite{DBLP:conf/nips/ZhangJGLLLL25} & AVP~\cite{wang2025activevideoperceptioniterative} & Ours \\
\midrule
\texttt{temporal\_localization} & 0.56 & 0.27 & 0.53 & 0.84 & \textbf{0.86} \\
\texttt{visual\_recognition} & 0.45 & 0.18 & 0.42 & 0.67 & \textbf{0.70} \\
\texttt{exclusion\_check} & 0.39 & 0.11 & 0.29 & 0.56 & \textbf{0.64} \\
\texttt{counting} & 0.26 & 0.06 & 0.20 & 0.43 & \textbf{0.45} \\
\texttt{context\_identification} & 0.74 & 0.40 & 0.70 & 0.90 & \textbf{0.91} \\
\texttt{ocr\_text\_reading} & 0.39 & 0.05 & 0.43 & 0.67 & \textbf{0.69} \\
\texttt{verification} & 0.26 & 0.04 & 0.24 & 0.53 & \textbf{0.56} \\
\texttt{audio\_visual\_evidence} & 0.42 & 0.17 & 0.51 & 0.79 & \textbf{0.81} \\
\texttt{comparison} & 0.36 & 0.12 & 0.36 & 0.70 & \textbf{0.74} \\
\texttt{arithmetic} & 0.39 & 0.05 & 0.38 & \textbf{0.81} & 0.73 \\
\bottomrule
\end{tabular}
\end{table}

\clearpage
\newpage

\section{Appendix -- Dataset Size and Statistical Reliability}
\label{app:stat-reliability}

The scale of AgentVidBench is primarily constrained both by our strict adherence to copyright regulations, which makes finding legally unencumbered videos suitable for complex QA challenging, and by the labor-intensive nature of constructing detailed reasoning trajectories for each sample rather than simply providing final answers. While we recognize that other video QA benchmarks have achieved large scales through massive curation and labeling budgets, constructing such datasets is often prohibitive for institutions with limited resources. To mitigate this limitation, we dedicated our efforts to meticulous curation, ensuring that our 100 video-QA samples maintain representativeness by deliberately diversifying them across video genres, video durations, required skills, and question difficulty, as illustrated in Figure~\ref{fig:dataset-statistics}.

To compensate for the smaller dataset size, we include confidence intervals and statistical significance tests to rigorously validate the reported differences between models and agentic frameworks within our dataset. We evaluate the statistical reliability of the
comparisons reported in Table~\ref{tab:main_results} through bootstrap resampling
and paired significance tests. Specifically, Section~\ref{app:framework-comparisons}
examines the accuracy differences between each agentic framework and the Single-Turn
baseline, while Section~\ref{app:ranking-stability} examines the stability of the
model ranking induced by the benchmark.

\subsection{Framework Comparisons}
\label{app:framework-comparisons}

\paragraph{Paired Confidence Intervals.}
To quantify the uncertainty in the accuracy differences between each
agentic framework and the Single-Turn baseline, we report paired bootstrap confidence
intervals in Table~\ref{tab:paired_ci}. Specifically, we draw 10{,}000 bootstrap
samples of the 100 questions with replacement and recompute the accuracy difference on
each, scoring both systems on the same drawn questions. We take the middle 95\% of
these differences as the confidence interval and report the largest model of each
family as a representative. The results show that eight of the twelve intervals lie
entirely on one side of zero. These gaps therefore keep the same direction under
essentially every alternative draw of questions and are statistically significant.

\begin{table}[h]
\centering
\caption{Paired bootstrap confidence intervals for the accuracy
difference between each agentic framework and Single-Turn ($10{,}000$ resamples).
Point estimates whose interval does not contain zero are shown in bold.}
\small
\setlength{\tabcolsep}{4pt}
\renewcommand{\arraystretch}{1.12}
\begin{tabular}{@{}l cccc@{}}
\toprule
\textbf{Models}
& \textbf{STAR}~\cite{fan2025toolaugmented}
& \textbf{DVD}~\cite{DBLP:conf/nips/ZhangJGLLLL25}
& \textbf{AVP}~\cite{wang2025activevideoperceptioniterative}
& \textbf{Ours} \\
\midrule
Qwen3.5-27B    & $\mathbf{-0.16}$ {\scriptsize $[-0.25, -0.07]$} & $\mathbf{+0.16}$ {\scriptsize $[+0.06, +0.26]$} & $\mathbf{+0.24}$ {\scriptsize $[+0.15, +0.34]$} & $\mathbf{+0.28}$ {\scriptsize $[+0.19, +0.37]$} \\
Gemma-4-31B    & $\mathbf{-0.29}$ {\scriptsize $[-0.39, -0.19]$} & $-0.07$ {\scriptsize $[-0.17, +0.02]$} & $+0.08$ {\scriptsize $[-0.01, +0.17]$} & $\mathbf{+0.13}$ {\scriptsize $[+0.03, +0.23]$} \\
Gemini 2.5 Pro & $\mathbf{-0.38}$ {\scriptsize $[-0.49, -0.27]$} & $\mathbf{-0.21}$ {\scriptsize $[-0.32, -0.10]$} & $+0.05$ {\scriptsize $[-0.03, +0.13]$} & $+0.07$ {\scriptsize $[-0.02, +0.16]$} \\
\bottomrule
\end{tabular}
\label{tab:paired_ci}
\end{table}

\paragraph{Significance Tests.}
Beyond interval estimation, we report McNemar's test in
Table~\ref{tab:mcnemar} to examine whether the observed accuracy differences could have
arisen by chance. The test considers only the questions that exactly one of the two
systems answers correctly and compares each agentic framework against Single-Turn in
each of the 16 open-weight settings in Table~\ref{tab:main_results}. These disagreement
questions fall into two cases: those answered correctly only by the agentic framework,
and those answered correctly only by Single-Turn. A large imbalance between the two
cases establishes a clear ordering between the systems, allowing us to check whether the
performance gap between the two approaches emerges as statistically significant. By pooling
the 16 open-weight models for each agentic framework and testing the imbalance among the
disagreement questions, we observe that all four frameworks yield a $p$ value below $0.001$, indicating a
substantive performance difference between each framework and Single-Turn. This indicates
that the performance differences observed in Table~\ref{tab:main_results} do not arise
only for a particular backbone but stem from the framework design itself, and that
AgentVidBench captures these differences reliably.

\begin{table}[h]
\centering
\caption{McNemar's test comparing each agentic framework against Single-Turn, pooled over the
16 open-weight settings in Table~\ref{tab:main_results}.}
\small
\setlength{\tabcolsep}{5.5pt}
\renewcommand{\arraystretch}{1.12}
\begin{tabular}{@{}l cc@{}}
\toprule
\textbf{Comparison} & \textbf{Settings resolved} & \textbf{Pooled $p$} \\
\midrule
Ours vs.\ Single-Turn & 15 / 16 & $<0.001$ \\
AVP~\cite{wang2025activevideoperceptioniterative} vs.\ Single-Turn  & 15 / 16 & $<0.001$ \\
DVD~\cite{DBLP:conf/nips/ZhangJGLLLL25} vs.\ Single-Turn  & \phantom{0}10 / 16 & $<0.001$ \\
STAR~\cite{fan2025toolaugmented} vs.\ Single-Turn & \phantom{0}12 / 16 & $<0.001$ \\
\bottomrule
\end{tabular}
\label{tab:mcnemar}
\end{table}

\newpage

\subsection{Model-ranking Stability}
\label{app:ranking-stability}

To examine how much the model ranking induced by the benchmark varies
with the choice of questions, we report ranking stability estimates in
Table~\ref{tab:rank_stability}. Specifically, we reuse the bootstrap samples of
Section~\ref{app:framework-comparisons} and re-rank all 14 models on every sample. A
model's score is defined as its mean accuracy over the five methods, so that the ranking
reflects performance on the benchmark as a whole rather than under a single framework.
From the resulting distribution of ranks, we obtain a 95\% rank interval for each model
together with its probability of being ranked first and of being placed in the top three.
If the ranks do not fluctuate substantially across samples, AgentVidBench can be regarded
as separating models by performance level reliably. The rank correlation across samples,
measured by Kendall's $\tau$-b, is $0.83$, so the ranking is largely preserved when the
questions change. Each of the top four models places in the top three across most samples
(61.7\%--97.1\%), whereas the remaining ten models reach the top three in at most 3.3\%
of samples. The boundary between the two groups is thus clearly maintained, demonstrating that AgentVidBench provides a robust and stable stratification of model capabilities.

\begin{table}[h]
\centering
\caption{Bootstrap stability of the model ranking induced by AgentVidBench. A model's score is its
mean accuracy over the five methods in Table~\ref{tab:main_results}, and all 16 models are re-ranked
on every bootstrap sample.}
\small
\setlength{\tabcolsep}{5.5pt}
\renewcommand{\arraystretch}{1.12}
\begin{tabular}{@{}l ccc@{}}
\toprule
\textbf{Models} & \textbf{95\% rank interval} & \textbf{P(rank 1)} & \textbf{P(top 3)} \\
\midrule
Gemma-4-31B \textit{+Think}     & $[1, 3]$  & 71.2\% & 98.1\% \\
Gemma-4-31B                     & $[1, 5]$  & 14.3\% & 79.0\% \\
Gemma-4-26B-A4B \textit{+Think} & $[1, 6]$  & 11.5\% & 72.9\% \\
Qwen3.5-27B \textit{+Think}     & $[2, 6]$  & \phantom{0}2.3\% & 35.0\% \\
\midrule
Remaining 10 models             & $[2, 16]$ & $\leq$0.5\% & $\leq$9.0\% \\
\bottomrule
\end{tabular}
\label{tab:rank_stability}
\end{table} 
\section{Appendix -- Controlled Comparison Between Agentic Frameworks}
\label{app:controlled-comparison}

\begin{table}[t!]
\centering
\caption{Single-Turn results under the original 1 fps setting and under a
matched 50-frame setting. Arrows indicate the change relative to the 1 fps setting.}
\small
\setlength{\tabcolsep}{5.5pt}
\renewcommand{\arraystretch}{1.12}
\begin{tabular}{@{}l cc cc@{}}
\toprule
\multirow{2}{*}{\textbf{Models}}
& \multicolumn{2}{c}{\textbf{1 fps}}
& \multicolumn{2}{c}{\textbf{50 frames}} \\
\cmidrule(lr){2-3}\cmidrule(l){4-5}
& Acc. & Traj. & Acc. & Traj. \\
\midrule
Gemini 2.5 Pro       & 0.51 & 0.71 & 0.40 \textcolor{red}{\scriptsize{$\downarrow$0.11}} & 0.68 \textcolor{red}{\scriptsize{$\downarrow$0.03}} \\
Qwen3.5-27B          & 0.22 & 0.53 & 0.29 \textcolor{blue}{\scriptsize{$\uparrow$0.07}} & 0.57 \textcolor{blue}{\scriptsize{$\uparrow$0.04}} \\
\quad\textit{+Think} & 0.25 & 0.57 & 0.23 \textcolor{red}{\scriptsize{$\downarrow$0.02}} & 0.47 \textcolor{red}{\scriptsize{$\downarrow$0.10}} \\
Gemma-4-31B          & 0.38 & 0.59 & 0.26 \textcolor{red}{\scriptsize{$\downarrow$0.12}} & 0.51 \textcolor{red}{\scriptsize{$\downarrow$0.08}} \\
\quad\textit{+Think} & 0.37 & 0.61 & 0.24 \textcolor{red}{\scriptsize{$\downarrow$0.13}} & 0.50 \textcolor{red}{\scriptsize{$\downarrow$0.11}} \\
\bottomrule
\end{tabular}
\label{tab:video_input}
\end{table}

In this section, we control the main confounding factors to examine
whether the differences in Table~\ref{tab:main_results} stem from the framework design
itself. Specifically, Section~\ref{app:video-input} matches the video input across all
models in the single-turn evaluation, while Section~\ref{app:tool-budget} aligns the
video recognition tools of the baselines with ours.

\subsection{Video Input}
\label{app:video-input}

In the single-turn evaluation of Table~\ref{tab:main_results}, the
50-frame setting was applied only to GPT-5 and Claude Opus 4.7, following the protocol of
prior work\,\cite{DBLP:journals/corr/abs-2604-05015}, since these models cannot structurally ingest an entire long
video at 1 fps. To compare performance under a common condition, we report in
Table~\ref{tab:video_input} the results of re-evaluating a subset of the models under the
same 50-frame setting. Single-Turn accuracy is lower under this setting than at 1 fps for
four of the five models. This supports the need for an agentic approach that inspects
only the relevant segments, particularly for models whose structural constraints prevent
them from processing a long video in a single pass.

\subsection{Tool Capability and Token Budget}
\label{app:tool-budget}

To rule out the possibility that the performance gains stem from stronger
external tools or a larger compute budget rather than from the framework design, we report
in Table~\ref{tab:tool_budget} the results of replacing the video recognition tools of DVD
and STAR with the same Gemini 2.5 Pro model used by our method. Even after the tools are aligned,
our method retains the highest accuracy on both backbones, while the accuracy of DVD and
STAR decreases slightly in some settings. The average number of tokens per question is also
not monotonically related to accuracy. This suggests that the observed performance
differences are difficult to explain by tool capability or compute budget alone.

\begin{table}[h]
\centering
\caption{Results after aligning the video recognition tools of DVD and
STAR with the Gemini 2.5 Pro model used by our method. Single-Turn rows use the matched
50-frame setting of Table~\ref{tab:video_input}; arrows indicate the change relative to
Table~\ref{tab:main_results}. Tok./Q denotes the average number of tokens per question.}
\small
\setlength{\tabcolsep}{5.5pt}
\renewcommand{\arraystretch}{1.12}
\begin{tabular}{@{}l ccc@{}}
\toprule
\textbf{Method} & \textbf{Acc.} & \textbf{Traj.} & \textbf{Tok./Q} \\
\midrule
\rowcolor{gray!7}
\multicolumn{4}{@{}l}{\quad\textit{Qwen3.5-27B}} \\
Single-Turn           & 0.29 & 0.57 & 14.2K \\
STAR + Gemini 2.5 Pro & 0.08 \textcolor{red}{\scriptsize{$\downarrow$0.01}} & 0.43 \textcolor{blue}{\scriptsize{$\uparrow$0.17}} & 40.3K \\
DVD + Gemini 2.5 Pro  & 0.36 \textcolor{red}{\scriptsize{$\downarrow$0.01}} & 0.64 \textcolor{blue}{\scriptsize{$\uparrow$0.06}} & 149.3K \\
Ours                  & \textbf{0.49} & \textbf{0.78} & 158.8K \\
\midrule
\rowcolor{gray!7}
\multicolumn{4}{@{}l}{\quad\textit{Gemma-4-31B}} \\
Single-Turn           & 0.26 & 0.51 & \phantom{00}3.7K \\
STAR + Gemini 2.5 Pro & 0.25 \textcolor{blue}{\scriptsize{$\uparrow$0.17}} & 0.42 \textcolor{blue}{\scriptsize{$\uparrow$0.14}} & 27.6K \\
DVD + Gemini 2.5 Pro  & 0.45 \textcolor{blue}{\scriptsize{$\uparrow$0.14}} & 0.65 \textcolor{blue}{\scriptsize{$\uparrow$0.06}} & 113.5K \\
Ours                  & \textbf{0.51} & \textbf{0.75} & 189.3K \\
\bottomrule
\end{tabular}
\label{tab:tool_budget}
\end{table}
\section{Appendix -- Performance on Existing Datasets}
\label{app:performance-existing-datasets}

We evaluate Single-Turn and our proposed baseline method, both with the Gemini 2.5 Pro backbone, on the Video-MME Long subset. Both achieve $\sim90\%$ accuracy, approaching the practical ceiling for this subset. As Video-MME questions are predominantly single-hop with the required evidence concentrated in 1--2 segments, the achievable gain from iterative tool use is upper-bounded. In contrast, AgentVidBench is built to require multi-hop reasoning over dispersed evidence (Sections~\ref{sec:benchmark-construction}--\ref{sec:statistics}). This is supported by Table~\ref{tab:ablation-modules}, where all three adaptive inspection controls contribute comparably on our benchmark.

\end{document}